\documentclass[11pt]{article}
\usepackage{acl}
\usepackage{times}
\usepackage{latexsym}
\usepackage[T1]{fontenc}
\usepackage[utf8]{inputenc}
\usepackage{microtype}
\usepackage{graphicx}
\usepackage{booktabs}
\usepackage{amsmath}
\usepackage{amssymb}
\usepackage{amsthm}
\usepackage{xspace}

\newtheorem{proposition}{Proposition}
\newtheorem{corollary}[proposition]{Corollary}

\newcommand{\base}{InertiaKV\xspace}
\newcommand{\method}{InertiaKV-Lazy\xspace}
\newcommand{\lazyfour}{InertiaKV-Lazy4\xspace}
\newcommand{\revision}[1]{#1}

\title{What Matters for Aggressive Decoding-Time KV Eviction?\\ Temporal Aggregation and Ranking Preservation}

\author{
\textbf{Bo Zeng}$^{1}$ \quad
\textbf{Yu Zhao}$^{2}$ \quad
\textbf{Yefeng Liu}$^{1}$ \quad
\textbf{Zhihong Lu}$^{1}$ \\
\textbf{Xuanfan Ni}$^{2}$ \quad
\textbf{Xintong Wang}$^{2,3}$\thanks{Corresponding author xintong.wang@uni-hamburg.de.} \\
$^{1}$Ant Group \quad
$^{2}$Alibaba Group \quad
$^{3}$Universit\"at Hamburg \\
\texttt{liheng.zb@antgroup.com, hanfeng.wxt@alibaba-inc.com} \\
\url{https://github.com/BobTsang-NLP/InertiaKV}
}

\begin{document}
\maketitle

\begin{abstract}
Decoding-time KV cache compression research focuses heavily on designing better token scoring functions, while the temporal rule that aggregates scores across decode steps is often treated as an implementation detail. Under aggressive KV compression, we find that exponential-moving-average (EMA) aggregation makes approximately order-preserving scorer modifications largely indistinguishable at the eviction-set level. Value-norm and entropy variants remain highly correlated with attention and produce nearly unchanged retention sets, whereas KeyDiff, key norm, recency, and a learned scorer alter the ranking and degrade substantially. We associate this stability with the evaluated aggregation, which couples layer weighting and temporal retention. Building on this observation, we introduce \textit{\textbf{InertiaKV}}, an EMA-based decoding-time eviction method, and \textit{\textbf{InertiaKV-Lazy}}, its periodic-refresh variant, which yields 1.34--1.46$\times$ decode throughput relative to full refresh InertiaKV. We also study Score-Free decoding as a separate empirical operating point: it scores the full context once at the first decode step, freezes that ranking, and incurs an average quality change of $+0.03$ while removing all subsequent scoring. Across six open-weight backbones and the LongBench, LongBench-v2, and RULER benchmarks, the results identify temporal aggregation and ranking preservation as distinct, consequential design factors; they do not imply that scoring quality is irrelevant in general.
\end{abstract}

\section{Introduction}

Decoding-time KV cache compression \citep{h2o2023,liu2023scissorhands,xiao2024tova} is a sequential decision problem: at each generation step, the compressor must decide which tokens to retain and which to permanently evict under a fixed memory budget. This decision decomposes into two \revision{commonly discussed} design axes: (1) a \emph{per-step scoring function} that estimates each token's current importance, and (2) an \emph{aggregation rule} that combines scores across \revision{decoder layers and decode steps} into the running utility estimate that drives eviction.

Existing methods focus primarily on the first axis: H2O \citep{h2o2023} accumulates attention mass, TOVA \citep{xiao2024tova} ranks by last-query attention, ScissorHands \citep{liu2023scissorhands} exploits attention persistence patterns, AdaKV \citep{li2024adakv} adapts budgets across heads, FastGen \citep{ge2024model} uses model-aware adaptive compression, KeyDiff \citep{liu2025keydiff} scores by key geometry. Prefill-time methods such as SnapKV \citep{xia2024snapkv} select tokens before generation begins; orthogonal techniques reduce per-token cost via quantization \citep{liu2024kivi} or cross-layer sharing \citep{brandon2024cla}. The \revision{aggregation} axis (single-step, cumulative sum, or bounded-memory smoothing) is treated as an implementation detail rather than a first-class design choice.

At moderate compression rates, existing methods generally maintain acceptable quality. Under aggressive compression, however, this no longer holds: on RULER \citep{hsieh2024ruler}, as the KV budget shrinks from 50\% to 10\%, TOVA, SnapKV, and AdaKV all degrade substantially, albeit at different rates, despite employing fundamentally different scoring functions (Table~\ref{tab:intro-degradation}). This shared degradation across diverse scorers suggests that scoring alone may not determine quality in this regime. Prior work has rarely isolated scoring from temporal aggregation; they are typically evaluated as a coupled system, leaving open which axis governs quality at such aggressive budgets.

\begin{table}[t]
\centering
\scriptsize
\begin{tabular*}{\columnwidth}{@{\extracolsep{\fill}}lrrrr}
\toprule
 & Full & 50\% & 25\% & 10\% \\
\midrule
TOVA & 94.59 & 77.10 & 65.83 & 48.34 \\
SnapKV & 94.59 & 68.79 & 43.26 & 24.33 \\
AdaKV+EA & 94.59 & 89.24 & 71.16 & 28.06 \\
\bottomrule
\end{tabular*}
\caption{\textbf{RULER score on Llama-3.1-8B as KV budget decreases.} Methods with different scoring functions exhibit different degradation rates as the budget shrinks.}
\label{tab:intro-degradation}
\end{table}

This motivates a shift in focus toward the neglected second axis: \revision{score aggregation}. We adopt exponential moving average (EMA) smoothing, which accumulates scores during decoding with bounded memory and a tunable coefficient $\alpha$. \revision{In the evaluated implementation, the running state is updated sequentially as scores are obtained from the decoder layers and is retained across decode steps, thereby coupling layer weighting with temporal retention.} To \revision{isolate scorer variation under this fixed aggregation procedure}, we conduct an initial experiment on Llama-3.1-8B at 90\% compression: we fix the \revision{sequential layer--temporal aggregation rule} and systematically vary the scoring function. Replacing the attention-based scorer with value-norm or entropy weighting produces near-identical eviction sets (Figure~\ref{fig:scoring-irrelevance}; mean pairwise Jaccard $= 0.97$). After the \revision{running} ranking stabilizes, approximately order-preserving scorer modifications rarely change keep/evict membership.

To assess the generality of this finding, we extend the experiment to Qwen2.5-7B and Llama-3.3-70B on LongBench \citep{bai2023longbench}, LongBench-v2 \citep{bai2024longbenchv2}, and RULER. The pattern is consistent: perturbations that approximately preserve the utility ranking \revision{produce similar retention sets under the evaluated aggregation}, while scorers that substantially reorder it degrade quality. Under these aggressive-compression settings, \revision{the evaluated sequential layer--temporal aggregation} makes approximately order-preserving scorer modifications largely indistinguishable at the eviction-set level; this conclusion does not extend to scorers that substantially alter the utility ranking.

\revision{We associate this with \emph{ranking inertia} under the evaluated sequential layer--temporal aggregation: repeated updates are associated with more stable retention-set membership (\S\ref{sec:momentum-dominance}), but the present experiments do not separate cross-step memory from the recurrence's implicit later-layer weighting.} This stability comes at a cost: stronger smoothing delays adaptation to relevance shifts, producing an \revision{observed} \emph{boundary-error vs.\ lag-error} tradeoff analyzed in \S\ref{sec:method-aggregation}. This persistence motivates InertiaKV-Lazy, which periodically refreshes scores and yields $1.34$--$1.46\times$ decode throughput relative to full-refresh InertiaKV. As a separate operating point, Score-Free computes scores across the decoder layers at the first decode step and freezes the resulting ranking; in our evaluation it eliminates all subsequent score updates with negligible average quality change ($\Delta{=}{+}0.03$ points). In summary, this paper makes three contributions:
\begin{itemize}
    % \item We identify ranking inertia under evaluated sequential layer-temporal aggregation across 7B-70B models on LongBench, LongBench-v2, and RULER: approximately order-preserving scorer changes leave the eviction set nearly unchanged, whereas ranking altering scorers can degrade quality substantially.
    \item We identify ranking inertia in sequential layer-temporal aggregation across 7B--70B models on LongBench, LongBench-v2, and RULER: approximately order-preserving scorer changes leave the eviction set nearly unchanged, whereas ranking-altering scorers can sharply degrade quality.
    \item We characterize the observed boundary-error vs.\ lag-error tradeoff and provide a rank-stability bound, explaining why stronger temporal smoothing can stabilize eviction decisions yet delay adaptation to relevance shifts. Our negative result further shows that local noise and drift alone cannot reliably guide adaptive smoothing.
    \item We translate decode-time ranking persistence into operating points: InertiaKV-Lazy yields $1.34$--$1.46\times$ decode speedup over full-refresh InertiaKV, while Score-Free eliminates all subsequent score updates after first-step initialization with negligible average quality change. These results establish refresh frequency as explicit quality--efficiency control.
\end{itemize}

\begin{figure*}[t]
\centering
\includegraphics[width=1.0\textwidth,trim=7pt 7pt 7pt 7pt,clip]{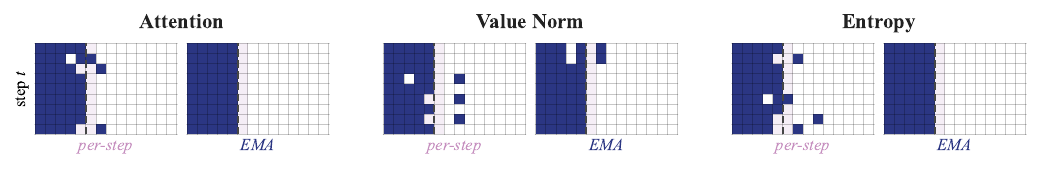}
\caption{\textbf{EMA stabilizes the retention sets of three highly rank-correlated scorers.} Each panel pairs per-step scoring (\emph{left}) with EMA-smoothed scoring (\emph{right}) for Attention, Value Norm, and Entropy. Per-step scores fluctuate, while EMA yields near-identical retention sets for these approximately order-preserving variants (mean pairwise Jaccard $= 0.97$ over top-$B$ token sets, averaged across decode steps and layers). Navy cells indicate retained tokens; columns are sorted by reference utility (high $\to$ low); the dashed line marks the budget boundary.}
\label{fig:scoring-irrelevance}
\end{figure*}

\section{Related Work}

\paragraph{Decode-time eviction methods.}
Decode-time KV eviction has evolved rapidly around the design of better scoring functions. H2O \citep{h2o2023} introduced cumulative attention mass as a heavy-hitter signal, establishing the template of score-and-evict under a fixed budget. ScissorHands \citep{liu2023scissorhands} then observed that attention patterns persist across decode steps, enabling future importance to be predicted from past attention. TOVA \citep{xiao2024tova} simplified the scoring to a single-step last-query signal, trading temporal information for lower overhead. Subsequent work enriched the scoring axis further: AdaKV \citep{li2024adakv} adapts budgets per attention head, KeyDiff \citep{liu2025keydiff} replaces attention weights with key-vector geometry, and TaDA \citep{joshi2025tada} adds mean-centering to adaptive compression. Recent work such as RocketKV \citep{behnam2025rocketkv} applies two-stage coarse-to-fine scoring for acceleration, and EvolKV \citep{yu2025evolkv} uses evolutionary search to optimize layer-wise budget allocation. Throughout this progression, the temporal rule that aggregates per-step scores into eviction decisions (cumulative sum, sliding window, or implicit single-step) has remained a background implementation choice rather than an object of study in its own right.

\paragraph{Temporal perspectives.}
Few works touch on the temporal dimension of eviction. ScissorHands' persistence hypothesis implies temporal stability but frames it as a property of the attention signal rather than of the aggregation mechanism. FAEDKV \citep{li2025faedkv} identifies recency bias in existing methods and proposes frequency-domain transforms for unbiased scoring across an infinite window---among the first efforts to treat temporal dynamics explicitly as a design problem. However, FAEDKV contributes a debiasing technique that improves the scoring signal; it does not decouple scoring from aggregation or test how temporal smoothing interacts with scorer ranking. Our finding is complementary: Under aggressive KV compression, we find that exponential-moving-average (EMA) aggregation makes approximately order-preserving scorer modifications largely indistinguishable at the eviction-set level, but does not rescue scorers whose rankings differ substantially from attention.

\paragraph{Prefill-time and sparse-attention methods.}
SnapKV \citep{xia2024snapkv} selects tokens during prefill; StreamingLLM \citep{liu2023streamingllm} retains attention sinks and a local window. These operate before or outside the decode loop. Sparse-attention systems \citep{tang2024quest,lee2024infinigen,sun2025shadowkv} maintain the full cache off-chip and load subsets per step---a fundamentally different regime from irreversible eviction.

\paragraph{Orthogonal dimensions.}
Cache size can also be reduced through quantization \citep{liu2024kivi}, cross-layer sharing \citep{brandon2024cla}, layer-wise budget shaping \citep{pyramidkv2024}, or reconstruction-based compression \citep{zhang2025kvzip}. Hardware-efficient attention implementations \citep{dao2023flashattention2} reduce per-step cost but do not address cache growth. These are composable with eviction-based methods but orthogonal to the question studied here. To our knowledge, prior work rarely isolates scoring from temporal aggregation as independently controllable design axes.

\section{Method}

We study decode-time KV cache compression under aggressive budgets. Given a prefill cache of length $L_0$ and compression ratio $\rho$, the decode-side budget is $B = \lfloor (1{-}\rho)L_0 \rfloor$. Unlike sparse attention, eviction is irreversible: removed KV pairs cannot be recovered.

\subsection{Temporal Utility Aggregation}
\label{sec:method-aggregation}

At each decode step $t$, we extract the last-query attention weights \citep{vaswani2017attention} from all $H$ heads and compute a per-token importance score by mean-pooling across heads:
\begin{equation}
    s_t(i) = \frac{1}{H}\sum_{h=1}^{H} a_t^{(h)}(i), \quad i = 1,\ldots,L_t,
\label{eq:score}
\end{equation}
where $a_t^{(h)}(i)$ is head $h$'s attention to cached position $i$. We maintain a utility vector $m_t \in \mathbb{R}^{L_t}$, where $m_t(i)$ estimates the aggregated importance of cached token $i$ through step $t$. Whenever the cache exceeds $B$, we evict the tokens with the lowest $m_t(i)$. The aggregation rule is the primary design choice:
\begin{align}
    \makebox[5.5em][l]{\text{Single-step:}} m_t &= s_t, \label{eq:single-step} \\
    \makebox[5.5em][l]{\text{Cumulative:}}  m_t &= m_{t-1} + s_t, \label{eq:cumulative} \\
    \makebox[5.5em][l]{\text{EMA:}}         m_t &= \alpha\, m_{t-1} + (1{-}\alpha)\,s_t. \label{eq:ema}
\end{align}
These three rules span the stability--adaptivity axis: single-step is maximally adaptive but noisy; cumulative never forgets but adapts slowly to relevance shifts; EMA ($\alpha \in (0,1)$) interpolates between them with bounded memory. Equations~\ref{eq:single-step}--\ref{eq:ema} present aggregation at the decode-step level. In our implementation, compression is applied at multiple transformer layers, and the EMA state is updated after each such layer; consequently, $\alpha$ controls both persistence across decode steps and the relative influence of successive layers.

Under aggressive compression, the choice of $\alpha$ exposes a fundamental tradeoff between two error modes. \emph{Boundary errors}: noisy score fluctuations near the budget boundary permanently evict useful tokens. \emph{Lag errors}: an overly slow estimator retains stale tokens after relevance has shifted. We express this as a two-term risk:
\begin{equation}
    \mathcal{L}(\alpha)
    =
    \underbrace{\mathcal{E}_{\mathrm{boundary}}(\alpha)}_{\text{ranking instability}}
    +
    \underbrace{\mathcal{E}_{\mathrm{lag}}(\alpha)}_{\text{adaptation delay}} .
\label{eq:risk-decomposition}
\end{equation}
We treat this decomposition as a qualitative design lens rather than a formal objective; the quantitative bound on ranking stability is given in Proposition~\ref{prop:rank-stability}. This framing guides the alpha-sensitivity analysis (\S\ref{sec:temporal-sensitivity}) and explains why adaptive scheduling fails (Appendix~\ref{sec:adaptive-alpha}).

We instantiate this framework as \textbf{\base}: decode-side EMA aggregation (Eq.~\ref{eq:ema}) with $\alpha{=}0.8$ and a proportional budget $B{=}\lfloor(1{-}\rho)L_0\rfloor$. The name reflects the core mechanism---EMA builds \emph{ranking inertia} that absorbs per-step scoring perturbations (\S\ref{sec:momentum-dominance}). Compression is applied during decoding (not prefill), making it naturally query-aware but not a prefill memory reduction. In one 79-step diagnostic trace at $\alpha{=}0.8$ and 90\% compression, the top-$B$ set remains identical to its first-step set, although within-set rankings continue to change (Appendix~\ref{sec:appendix-diagnostics}).

\subsection{Exploiting Ranking Stability}
\label{sec:method-lazy}

If utility ranking changes little between consecutive steps, the expensive query-aware score $s_t$ need not be recomputed every step. We define two operating points that progressively exploit this property.

\paragraph{Lazy refresh.}
  \method refreshes scores at the first decode step and every
  $r$ steps thereafter. Let
  $\mathcal{R}_r=\{t\geq1:(t-1)\equiv0\pmod r\}$ denote the
  set of refresh steps:
  \begin{equation}
      m_t {=}
      \begin{cases}
          \alpha m_{t-1} {+} (1{-}\alpha)s_t,
          & t \in \mathcal{R}_r, \\[2pt]
          m_{t-1}, & \text{otherwise}.
      \end{cases}
  \label{eq:lazy}
  \end{equation}
  Eviction still occurs whenever the cache exceeds $B$; only the score observation is skipped on non-refresh steps. The discrepancy from full refresh can grow with both the refresh interval and the update weight. This heuristic motivates moderate refresh intervals; the resulting quality--speed tradeoff is evaluated empirically rather than guaranteed by
  Proposition~\ref{prop:rank-stability}.

\paragraph{Score-Free decoding.}\label{sec:method-scorefree}
Score-Free is a separate empirical operating point rather than the $r\!\to\!\infty$ consequence of Proposition~\ref{prop:rank-stability}. EMA is inactive during prefill and the momentum state is initialized as $m_0=0$. At the first decode step, Score-Free sequentially processes the eligible layer scores and freezes the resulting state. No score is computed from step 2 onward, while eviction continues using this fixed ranking. Thus Score-Free is more precisely ``score once, then never refresh.'' Its effectiveness is an empirical observation about the evaluated workloads, not a claim that the first-step ranking is universally optimal.

\begin{center}
\fbox{\begin{minipage}{0.91\columnwidth}
\textbf{Score-Free decoding}\quad
\textbf{Input:} prefill KV cache $\mathcal{C}_0$, budget $B$, EMA coefficient $\alpha$.\\
\textbf{Initialize:} $m_0(i)\leftarrow 0$ for every prefill position $i$.\\
\textbf{First decode step:} sequentially process the eligible full-context layer scores; freeze the resulting state and evict the lowest-ranked entries if $|\mathcal{C}_1|>B$.\\
\textbf{Later steps $t\geq2$:} do not compute $s_t$ and keep the prefill-token ranking fixed by $m_1$; initialize each new token to the current mean momentum, and evict whenever the cache exceeds $B$.
\end{minipage}}
\end{center}

\noindent Controlled auxiliary ablations (static reserve, first-eviction rescue) are reported in Appendix~\ref{sec:appendix-controlled}.

\section{Experimental Setup}
\subsection{Benchmarks}
Our main evaluation uses LongBench \citep{bai2023longbench}, LongBench-v2 \citep{bai2024longbenchv2}, and RULER \citep{hsieh2024ruler}. LongBench covers extractive QA and generation tasks. RULER comprises 13 synthetic subtasks spanning single-key, multi-key, multi-query, and multi-value retrieval, plus variable tracking and word extraction; multi-evidence subtasks (multikey, multiquery) are the hardest for eviction methods because they require retaining multiple dispersed tokens simultaneously---the regime where aggressive compression most frequently fails. We additionally include a needle-in-a-haystack grid as an auxiliary retrieval stress test, a MoE stress check on Qwen3-30B-A3B, and architecture/scale stress tests in the appendix. LongBench and RULER are on a 0--100 scale; LongBench-v2 uses proportion correct (0--1) following its official protocol.

\subsection{Models}
We evaluate on Llama-3.1-8B-Instruct \citep{grattafiori2024llama3} and Qwen2.5-7B-Instruct \citep{yang2024qwen2} for the main results. These two models represent widely used open-weight families and use different grouped-query attention (GQA; \citealp{ainslie2023gqa}) configurations: Llama uses 8 KV heads with $4\times$ query-head grouping, while Qwen uses 4 KV heads with $7\times$ grouping. This provides coverage of two distinct GQA configurations. We include Llama-3.3-70B-Instruct to test whether the same empirical pattern extends beyond 7--8B parameters, and Qwen3-30B-A3B-Instruct as a minimal MoE stress check where sparse expert activation may interact differently with cache eviction. Appendix stress tests use Mistral-7B-Instruct-v0.3 and Qwen2.5-14B-Instruct to probe transfer limits rather than to support the main claim. All experiments run on NVIDIA H100 and H200 GPUs.

\subsection{Compared Methods}
Baselines span the landscape of KV cache compression: TOVA \citep{xiao2024tova} (single-step last-query attention), SnapKV \citep{xia2024snapkv} (prefill-time observation-window selection), AdaKV + Expected Attention \citep{li2024adakv} (adaptive per-head budgets), KeyDiff \citep{liu2025keydiff} (key-vector geometry), and KVzip \citep{zhang2025kvzip} (reconstruction-based compression). We report both the full-refresh \base scorer and practical \method variants. All comparisons use 90\% compression---the aggressive regime in which temporal effects are most pronounced in our experiments (Table~\ref{tab:intro-degradation}). We also include a cumulative-attention baseline that replaces EMA (Eq.~\ref{eq:ema}) with the non-decaying accumulator (Eq.~\ref{eq:cumulative}).

\section{Results}

\subsection{Order-Preserving Scoring Perturbations Are Absorbed by EMA}
\label{sec:scoring-ablation}

Among the three aggregation rules defined in \S\ref{sec:method-aggregation}, EMA ($\alpha{=}0.8$) substantially outperforms single-step on retrieval-heavy settings ($+1.8$ on LongBench and $+21.6$ on RULER on Llama), while cumulative aggregation is competitive but mixed across models and benchmarks (Appendix~\ref{sec:appendix-temporal}). Given this strong baseline, we now ask: are order-preserving modifications to the scoring function absorbed by EMA momentum? We test two alternative scorers on Qwen2.5-7B across all eight LongBench tasks at 90\% compression: \textbf{VNorm}, which multiplies each token's attention weight by the mean-normalized $\ell_2$ norm of its value state; and \textbf{Entropy weighting}, a per-layer scalar that modulates the step's overall influence without changing the relative ordering among tokens. Both modifications are applied per step before EMA aggregation.

\begin{figure*}[t]
\centering
\includegraphics[width=0.8\textwidth,trim=7pt 9pt 7pt 7pt,clip]{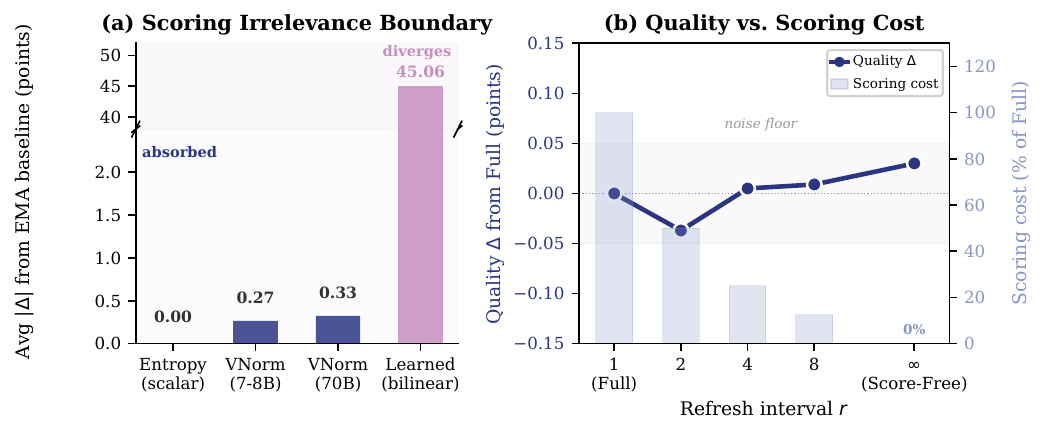}
\caption{\textbf{Core findings.} \textbf{(a)} Approximately order-preserving perturbations (Entropy, VNorm) are absorbed by EMA momentum (avg $|\Delta| < 0.35$), while the evaluated learned scorer changes the ranking and diverges ($|\Delta| = 45.06$). \textbf{(b)} Lazy refresh reduces scoring frequency with little quality change. Score-Free is shown as a separate endpoint that scores once at the first decode step and performs no subsequent refresh.}
\label{fig:main-result}
\end{figure*}

Figure~\ref{fig:main-result}(a) shows the result. Entropy weighting produces \emph{exactly} zero task-score deviation in this evaluation. Its per-layer scalar preserves the within-layer token ordering, although different layer weights can in principle alter the final aggregate ranking. VNorm, a genuinely per-token perturbation, produces small deviations (avg $|\Delta| \leq 0.33$), consistent with EMA momentum absorbing this token-wise reweighting. In contrast, the $\alpha{=}0$ endpoint drops the average by 1.33 points, and random eviction collapses to 12.45. Here $\alpha{=}0$ retains only the last processed layer, so the ablation is not purely temporal. The same pattern holds on Llama-3.1-8B (VNorm $\leq 0.35$; $\alpha{=}0$ drops from 43.88 to 42.11).

\paragraph{Scale check: Llama-3.3-70B.}
To test whether this absorption pattern extends beyond the 7--8B scale, we repeat the ablation on Llama-3.3-70B-Instruct at 90\% compression (Table~\ref{tab:scoring-ablation-70b}). At this scale, the per-step perturbation produces small residual differences ($\leq 0.67$ points), while the $\alpha{=}0$ endpoint drops by 4--13 points across all tasks. This contrast changes both layer weighting and temporal retention, rather than momentum alone.

\begin{table}[t]
\centering
\scriptsize
\begin{tabular*}{\columnwidth}{@{\extracolsep{\fill}}lrrrr}
\toprule
 & No Press & EMA & VNorm & $\alpha\!=\!0$ \\
\midrule
hotpotqa & 62.86 & 63.45 & 63.44 & 59.26 \\
2wikimqa & 69.22 & 69.41 & 68.96 & 63.69 \\
qasper & 49.39 & 45.53 & 44.86 & 32.48 \\
musique & 48.88 & 48.55 & 48.35 & 42.05 \\
\midrule
\textbf{Avg} & \textbf{57.59} & \textbf{56.74} & \textbf{56.40} & 49.37 \\
\bottomrule
\end{tabular*}
\caption{\textbf{Scoring-function ablation on Llama-3.3-70B-Instruct at 90\% compression.} VNorm deviates from EMA by at most 0.67 points (avg $\Delta{=}{-}0.33$), while $\alpha{=}0$ drops by 7.37 points on average.}

%This contrast is not purely temporal.}
\label{tab:scoring-ablation-70b}
\end{table}

\paragraph{Ranking correlation defines the observed boundary.}
We measure each scorer against attention at layer 15 using Spearman correlation and top-$B$ Jaccard at a 10\% budget. VNorm and entropy remain highly correlated ($\rho\!\approx\!0.97$) and preserve quality. KeyDiff changes the ordering ($\rho{=}0.49$, Jaccard $=0.38$) and loses roughly 9--14 LongBench points; key norm ($\rho{=}0.00$) and recency ($\rho{=}0.12$) collapse on retrieval. The learned scorer is a 327,680-parameter bilinear head, $\text{score}(q,k)=(W_k k)^\top(W_q q)/\sqrt{64}$, trained by self-distillation with MSE$+$KL ($\lambda{=}0.1$, $\tau{=}1.0$). Its held-out ranking is far from attention ($\rho\!\approx\!0.05$, Jaccard $\approx0.08$); it loses 1.94 points on LongBench and 45.06 on RULER. These results establish an empirical correlation spectrum, not a proof that every non-attention scorer must fail: our evidence does not rule out a competitive alternative scorer with a different signal.

\subsection{Score Refresh Can Be Reduced or Eliminated}
\label{sec:score-free}

Because the decode-time ranking changes slowly after EMA stabilization, frequent score updates may be redundant though scorer quality still matters. We test this by reducing refresh frequency.

\paragraph{Lazy refresh.} \lazyfour is the default operating point: benchmark-level quality deltas versus full refresh are small in magnitude. LongBench confidence intervals include zero on both backbones; RULER intervals also include zero except for a small Llama regression ($\Delta{=}{-}0.151$, 95\% CI $[-0.223,-0.078]$). For Lazy4, benchmark deltas are below 0.05 except for the Llama RULER regression reported above; $r{=}4$ provides a practical quality--speed compromise. Lazy4 skips scoring on three out of every four steps, reducing layer-scoring operations by $4\times$ and yielding 1.34$\times$ and 1.46$\times$ decode speedups on Llama and Qwen respectively at 64k context (Figure~\ref{fig:lazy-tradeoff} in Appendix~\ref{sec:appendix-details}).

\paragraph{Score-Free decoding.} As specified in \S\ref{sec:method-scorefree}, Score-Free computes one full-context score at the first decode step and then freezes the ranking. Table~\ref{tab:score-free} reports Llama-3.1-8B across all eight LongBench tasks at 90\% compression. The average deviation is $+0.03$, and the aggregate paired-bootstrap 95\% confidence interval spans zero. Seven of eight tasks are statistically indistinguishable from full refresh. MultiNews is a genuine small regression ($\Delta{=}{-}0.61$, 95\% CI $[-1.01,-0.21]$, paired $t$-test $p{=}0.003$), whereas passage retrieval improves by $+0.50$ but is not significant (95\% CI $[0.00,1.50]$, $p{=}0.32$). On Qwen2.5-7B, the average deviation is $-0.01$ (max $|\Delta|{=}0.47$).

\begin{table}[t]
\centering
\scriptsize
\begin{tabular*}{\columnwidth}{@{\extracolsep{\fill}}lrrrr}
\toprule
 & Full & Lazy4 & Score-Free & $\Delta$(SF$-$Full) \\
\midrule
hotpotqa & 58.54 & 58.80 & 58.53 & $-$0.01 \\
2wikimqa & 52.52 & 52.52 & 52.62 & +0.10 \\
qasper & 42.42 & 42.48 & 42.50 & +0.08 \\
musique & 32.16 & 32.20 & 32.16 & +0.00 \\
gov\_report & 27.80 & 28.11 & 27.64 & $-$0.16 \\
multi\_news & 20.77 & 20.58 & 20.16 & $-$0.61 \\
qmsum & 23.65 & 23.85 & 24.01 & +0.36 \\
passage\_ret & 99.50 & 99.50 & 100.0 & +0.50 \\
\midrule
\textbf{Avg} & \textbf{44.67} & \textbf{44.76} & \textbf{44.70} & +0.03 \\
\bottomrule
\end{tabular*}
% \caption{\textbf{Score-Free decoding on Llama-3.1-8B across eight LongBench tasks at 90\% compression.} Score-Free computes attention once at the first decode step and never refreshes afterward. The mean deviation is $+0.03$ and its paired-bootstrap 95\% confidence interval spans zero; MultiNews is the one significant regression.}
\caption{\textbf{Score-Free decoding on Llama-3.1-8B across eight LongBench tasks at 90\% compression.} Score-Free computes attention once at the first decode step and never refreshes afterward.}
\label{tab:score-free}
\end{table}

After first-step initialization, the score-update machinery can therefore be removed from the decode loop, minimizing score-update overhead among the evaluated InertiaKV variants. Figure~\ref{fig:scoring-overhead} quantifies this cost: at 16k context on Llama-3.1-8B, score updates account for 36\% of Full Refresh decode time; Lazy4 reduces this share to 14\%; Score-Free removes subsequent score updates. Controlled ablations show that auxiliary mechanisms (static reserve, first-eviction rescue) do not improve this operating point (Appendix~\ref{sec:appendix-controlled}).

\begin{figure}[t]
\centering
\includegraphics[width=0.80\columnwidth,trim=7pt 9pt 7pt 7pt,clip]{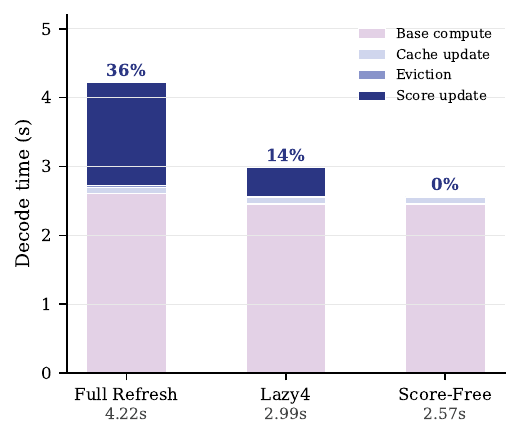}
\caption{\textbf{Decode time composition at 16k context on Llama-3.1-8B (90\% compression)}. Score updates account for 36\% of Full Refresh decode time; Lazy4 reduces this share to 14\%; Score-Free removes score updates after first-step initialization.}
\label{fig:scoring-overhead}
\end{figure}

\subsection{Why: Momentum Dominance}
\label{sec:momentum-dominance}

The scorer ablations and lazy-refresh results are consistent with a common mechanism: EMA accumulation builds \emph{ranking inertia} that can reduce the effect of large single-step perturbations. Even when per-step score perturbations are large in relative L2 norm, the current update enters EMA with weight $(1{-}\alpha){=}0.2$; an 82\% relative perturbation therefore contributes at roughly 16\% relative scale before accounting for the accumulated state. The empirical evidence for stabilization is the reduction in top-$B$ churn and high retention-set persistence reported in Appendix~\ref{sec:appendix-diagnostics}; Figure~\ref{fig:mechanism-trajectory} illustrates the mechanism. Score-Free is intentionally excluded from this explanation: it freezes the first-step ranking and is supported by the separate empirical test in \S\ref{sec:score-free}.

For analysis, we consider an idealized fixed-token process with one aggregate update per step, omitting token insertion, eviction, and the layer-sequential updates used in the reported implementation. We denote its state by $\mu_T$ to distinguish it from the dynamic cache state $m_t$. Under EMA, $\mu_T(i)=(1{-}\alpha)\sum_{\tau=1}^{T}\alpha^{T-\tau}s_\tau(i)+\alpha^T\mu_0(i)$.

\begin{proposition}[EMA Expected-Gap and Rank-Stability Bound]
\label{prop:rank-stability}
For each token $i$, assume that $\{s_t(i)\}_{t\geq 1}$ is i.i.d.\ over time with mean $\bar{s}(i)$ and variance bounded by $\sigma^2$, with independence across tokens and deterministic initialization $\mu_0(i)=0$. For two tokens $i,j$ with utility gap $\delta_{ij} \triangleq \bar{s}(i) - \bar{s}(j) > 0$:
\begin{enumerate}\setlength{\itemsep}{1pt}\setlength{\parskip}{0pt}\setlength{\parsep}{0pt}
    \item[\textbf{(a)}] The expected momentum gap converges to $\delta_{ij}$.
    \item[\textbf{(b)}] $\mathrm{Var}[\mu_T(i) - \mu_T(j)] \leq \dfrac{2(1-\alpha)}{1+\alpha}\,\sigma^2$.
    \item[\textbf{(c)}] In the stationary limit, $\displaystyle \limsup_{T\to\infty}\Pr[\mu_T(i) < \mu_T(j)] \leq \dfrac{2(1-\alpha)\sigma^2}{(1+\alpha)\,\delta_{ij}^2}$.
\end{enumerate}
\vspace{-0.5em}
\noindent The bound decreases with stronger smoothing and with a larger expected utility gap relative to score variance.
\end{proposition}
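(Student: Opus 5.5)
The argument works directly from the closed form $\mu_T(i)=(1-\alpha)\sum_{\tau=1}^{T}\alpha^{T-\tau}s_\tau(i)$, which is the given expression with $\mu_0=0$. Define the gap process $D_T\triangleq\mu_T(i)-\mu_T(j)=(1-\alpha)\sum_{\tau=1}^{T}\alpha^{T-\tau}\bigl(s_\tau(i)-s_\tau(j)\bigr)$. All three parts follow from the first two moments of $D_T$ together with Chebyshev's inequality.

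\textbf{Part (a).} By linearity of expectation and identical distribution over time, $\mathbb{E}[D_T]=(1-\alpha)\delta_{ij}\sum_{k=0}^{T-1}\alpha^k=(1-\alpha^T)\delta_{ij}$. This converges to $\delta_{ij}$ because $\alpha\in(0,1)$.

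\textbf{Part (b).} The summands $s_\tau(i)-s_\tau(j)$ are independent across $\tau$ because the scores are i.i.d.\ over time. Within a step, $i$ and $j$ are independent by the cross-token assumption, so $\mathrm{Var}[s_\tau(i)-s_\tau(j)]\le 2\sigma^2$. Hence
\[
\mathrm{Var}[D_T]\le(1-\alpha)^2\cdot 2\sigma^2\sum_{k=0}^{T-1}\alpha^{2k}=\frac{2(1-\alpha)^2\sigma^2(1-\alpha^{2T})}{1-\alpha^2}\le\frac{2(1-\alpha)}{1+\alpha}\sigma^2 ,
\]
which holds uniformly in $T$.

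\textbf{Part (c).} The event $\{\mu_T(i)<\mu_T(j)\}$ is $\{D_T<0\}$. Since $D_T<0$ implies $\mathbb{E}[D_T]-D_T>\mathbb{E}[D_T]$, we have $\{D_T<0\}\subseteq\{|D_T-\mathbb{E}D_T|\ge\mathbb{E}D_T\}$ whenever $\mathbb{E}D_T>0$, which holds for all $T\ge1$. Chebyshev's inequality then gives
\[
\Pr[D_T<0]\le\frac{\mathrm{Var}[D_T]}{(\mathbb{E}D_T)^2}\le\frac{2(1-\alpha)\sigma^2}{(1+\alpha)(1-\alpha^T)^2\delta_{ij}^2}.
\]
Because $(1-\alpha^T)^2\to1$, taking $\limsup_{T\to\infty}$ yields the stated bound.

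The main obstacle is the finite-$T$ denominator $(1-\alpha^T)^2$. It arises from the zero initialization, which biases the mean gap toward zero, and it is the reason the statement is phrased as a $\limsup$ rather than as a bound for every $T$. I would also point out that the final expression is a one-sided bound obtained via Chebyshev; Cantelli's inequality would sharpen it to $v/(v+\delta^2)$, but that refinement is not needed. The monotonicity remark after the statement follows because $\frac{1-\alpha}{1+\alpha}$ is decreasing in $\alpha$ and the bound scales as $\sigma^2/\delta_{ij}^2$.
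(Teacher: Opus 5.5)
Your proposal is correct and follows essentially the same route as the paper's proof: you expand the recurrence in closed form to get $\mathbb{E}[D_T]=(1-\alpha^T)\delta_{ij}$, bound the variance with a geometric sum using temporal and cross-token independence, and obtain (c) by applying Chebyshev to $D_T-\mathbb{E}[D_T]$ and taking $\limsup_{T\to\infty}$. The differences are cosmetic. You bound the per-step difference variance by $2\sigma^2$ before summing, whereas the paper bounds each token's variance and then doubles. You also state the finite-$T$ bound with the $(1-\alpha^T)^2$ factor, which the paper's limiting step leaves implicit.
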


\begin{corollary}[Expected-Ranking Preservation under EMA]
\label{cor:scorer-equivalence}
If an alternative scorer applies a common strictly increasing transformation to expected utilities (i.e., $\mathbb{E}[\tilde{s}_t(i)] = f(\bar{s}(i))$), its expected EMA momentum has the same asymptotic ranking. Uniform positive scaling is exactly invisible to ranking, while the misranking bound in Proposition~\ref{prop:rank-stability} depends on the transformed gaps and variances.
\end{corollary}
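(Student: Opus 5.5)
The statement to prove is the Corollary, which builds on Proposition~\ref{prop:rank-stability}. The plan is to apply the closed form $\mu_T(i)=(1{-}\alpha)\sum_{\tau=1}^{T}\alpha^{T-\tau}s_\tau(i)+\alpha^T\mu_0(i)$ directly to the alternative scorer $\tilde s_t$, with $\tilde\mu_0=0$. This gives an alternative momentum $\tilde\mu_T$.

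\textbf{Step 1 (expected momentum).} By linearity of expectation and the assumption $\mathbb{E}[\tilde s_t(i)]=f(\bar s(i))$ for every $t$,
\[
\mathbb{E}[\tilde\mu_T(i)] = (1-\alpha)\sum_{\tau=1}^{T}\alpha^{T-\tau} f(\bar s(i)) = (1-\alpha^T)\, f(\bar s(i)).
\]
Time-stationarity of the mean is the only distributional property needed here.

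\textbf{Step 2 (ranking).} The factor $1-\alpha^T$ is strictly positive for all $T\ge1$ and is the same for every token. So for any tokens with $\bar s(i)>\bar s(j)$, strict monotonicity of $f$ gives $\mathbb{E}[\tilde\mu_T(i)]>\mathbb{E}[\tilde\mu_T(j)]$, for every $T$ and therefore asymptotically. Ties in $\bar s$ map to ties in $f(\bar s)$. The expected-ranking order is thus identical to that of the original scorer, whose expected momentum $(1-\alpha^T)\bar s(i)$ by Proposition~\ref{prop:rank-stability}(a) is ordered by $\bar s$. This is the first claim.

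\textbf{Step 3 (uniform scaling).} Let $\tilde s_t=c\,s_t$ with $c>0$ holding pathwise, not only in expectation. Linearity of the recursion gives $\tilde\mu_T=c\,\mu_T$ exactly for every realization, so every pairwise comparison, and hence the top-$B$ set, coincides at each step. This is the sense of ``exactly invisible.''

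\textbf{Step 4 (misranking bound).} Assume in addition that $\tilde s_t(i)$ is i.i.d.\ over time with variance bounded by $\tilde\sigma^2$ and independent across tokens. Proposition~\ref{prop:rank-stability}(b)--(c) then applies verbatim to $\tilde s$ with gap $\tilde\delta_{ij}=f(\bar s(i))-f(\bar s(j))$, giving the bound $2(1-\alpha)\tilde\sigma^2/((1+\alpha)\tilde\delta_{ij}^2)$. Under pure scaling, $\tilde\sigma=c\sigma$ and $\tilde\delta=c\delta$, so the ratio is unchanged, consistent with Step 3.

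The main obstacle is interpretive rather than technical. The corollary's hypothesis constrains only expectations, so it cannot be claimed that realized momenta rank identically beyond the scaling case. I would state plainly that the asymptotic-ranking statement concerns $\mathbb{E}[\tilde\mu_T]$. Any probabilistic statement must go through Step 4 under the additional i.i.d.\ and variance assumptions inherited from the Proposition.
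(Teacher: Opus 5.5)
Your proposal is correct and follows the same route as the paper. The appendix only writes out the proof of Proposition~\ref{prop:rank-stability}, and the corollary follows from it as you argue: the part (a) expansion applied to $\tilde s_t$ gives $\mathbb{E}[\tilde\mu_T(i)]=(1-\alpha^T)f(\bar s(i))$, linearity of the recursion makes uniform positive scaling invisible to ranking, and parts (b)--(c) are re-applied with the transformed gap and variance. Your caveat is also a sound reading of what the paper leaves implicit: the ranking claim concerns only expected momenta, and Step~4 needs the inherited i.i.d.\ and variance assumptions.
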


\noindent Proof is in Appendix~\ref{sec:appendix-proofs}. On Llama-3.1-8B, we observe the corresponding stabilization diagnostics: raw attention scores exhibit 34\% top-$B$ Jaccard churn; EMA reduces this to 8\%; and 75\% of decode steps produce $\leq 1$ token change in the eviction set.

\begin{figure*}[t]
\centering
\includegraphics[width=0.46\textwidth,trim=125pt 0pt 125pt 175pt,clip]{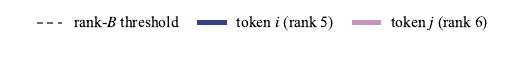}\\[-4pt]
\includegraphics[width=0.88\textwidth,trim=10pt 29pt 10pt 3pt,clip]{figures/fig_mechanism_trajectory.pdf}
% \caption{\textbf{Illustrative two-token trajectory showing how EMA can suppress boundary flips.} \textbf{(a)} Raw per-step scores for tokens near the rank-$B$ boundary cross the threshold 5 out of 16 steps. \textbf{(b)} EMA smoothing ($\alpha{=}0.8$) produces a persistent separation with zero flips in this illustration. Empirical rank-trace statistics are reported in Appendix~\ref{sec:appendix-diagnostics}.}
\caption{\textbf{Two-token illustration showing how EMA can suppress boundary flips.}
(a) Raw per-step scores of tokens near the rank-$B$ boundary cross the threshold in 5 of 16 steps.
(b) EMA smoothing ($\alpha=0.8$) yields persistent separation without flips.
Appendix~I provides empirical rank-trace statistics.}
\label{fig:mechanism-trajectory}
\end{figure*}

\begin{figure*}[t]
\centering
\includegraphics[width=0.8\textwidth,trim=7pt 7pt 7pt 7pt,clip]{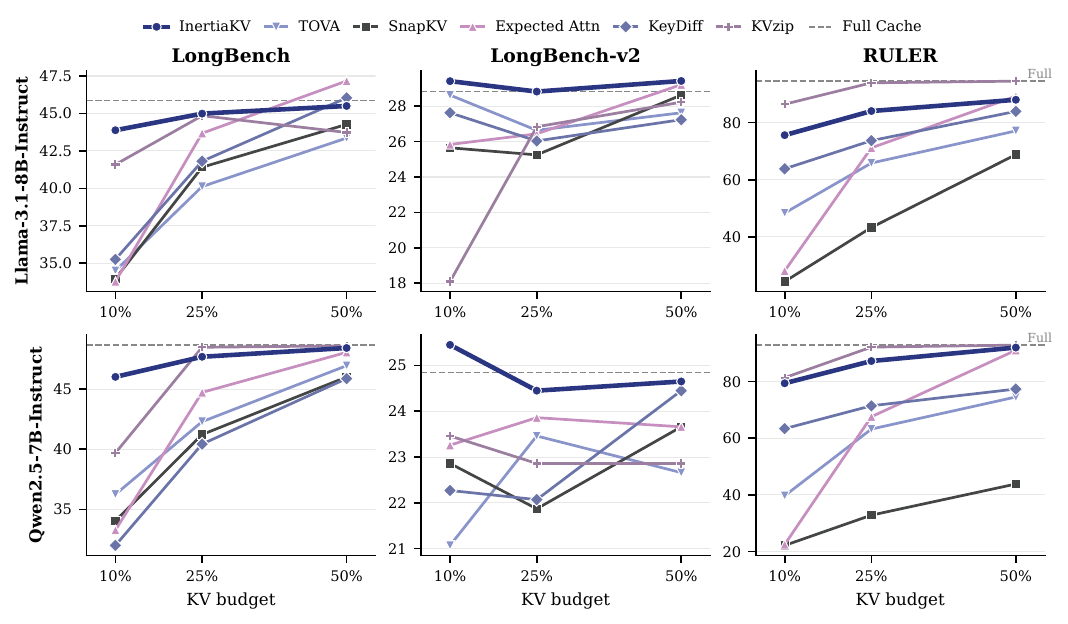}
\caption{\textbf{Quality vs.\ KV budget across three benchmarks} on Llama-3.1-8B (top) and Qwen2.5-7B (bottom) at compression ratios 50\%, 75\%, and 90\%. \base (dark blue) maintains consistently strong quality under aggressive compression, while lightweight baselines degrade sharply---especially on RULER.}
\label{fig:comparison-linechart}
\end{figure*}

\subsection{Comparison with Existing Methods}
\label{sec:reference-comparison}

Having established the interaction between EMA and scorer ranking, we now position \base against existing methods. Table~\ref{tab:reference-comparison} reports quality under the same 90\% compression protocol against TOVA, SnapKV, AdaKV + Expected Attention, KVzip, and KeyDiff.

\begin{table*}[t]
\centering
\scriptsize
\begin{tabular*}{\textwidth}{@{\extracolsep{\fill}}llrrrrrrrr}
\toprule
Model & Benchmark & Full & \base & Lazy4 & TOVA & SnapKV & AdaKV+EA & KVzip & KeyDiff \\
\midrule
Llama-3.1-8B & LongBench & 45.87 & 43.88 & 43.88 & 34.50 & 33.97 & 33.78 & 41.59 & 35.26 \\
Llama-3.1-8B & LongBench-v2 & 0.288 & 0.294 & 0.294 & 0.286 & 0.256 & 0.258 & 0.181 & 0.276 \\
Llama-3.1-8B & RULER & 94.59 & 75.60 & 75.45 & 48.34 & 24.33 & 28.06 & 86.44 & 63.82 \\
\midrule
Qwen2.5-7B & LongBench & 48.67 & 46.01 & 45.98 & 36.25 & 34.05 & 33.29 & 39.69 & 32.01 \\
Qwen2.5-7B & LongBench-v2 & 0.249 & 0.254 & 0.250 & 0.211 & 0.229 & 0.233 & 0.235 & 0.223 \\
Qwen2.5-7B & RULER & 93.00 & 79.43 & 79.44 & 39.69 & 22.14 & 22.46 & 81.40 & 63.39 \\
\bottomrule
\end{tabular*}
\caption{\textbf{Baseline comparison under the 90\% compression-ratio protocol.} LongBench-v2 scores are on a 0--1 scale; all others are on 0--100. Methods differ in compression timing, memory footprint, and computational profile.}
\label{tab:reference-comparison}
\end{table*}

\base and \lazyfour outperform all lightweight baselines (TOVA, SnapKV, AdaKV, KeyDiff) on most rows. KVzip is strongest on Llama RULER (+10.84 over \base), and we therefore report the full quality--cost comparison rather than treating the methods as directly interchangeable. In the released masking-based KVzip mode that we evaluated, KVzip obtains 86.44 RULER with 31.54\,s prefill, 10.9 token/s decode, and 64,029 retained decode-side tokens; \base obtains 75.60 with 4.86\,s prefill, 39.1 token/s decode, and 6,401 retained tokens. Thus KVzip provides higher RULER quality, while \base is 6.5$\times$ faster in prefill, 3.6$\times$ faster in decode, and retains approximately 10$\times$ fewer decode-side KV tokens. Figure~\ref{fig:kvzip-pareto} makes these tradeoffs explicit. Figure~\ref{fig:comparison-linechart} shows quality across KV budgets for the remaining baselines.

\begin{figure}[t]
\centering
\includegraphics[width=0.9\columnwidth]{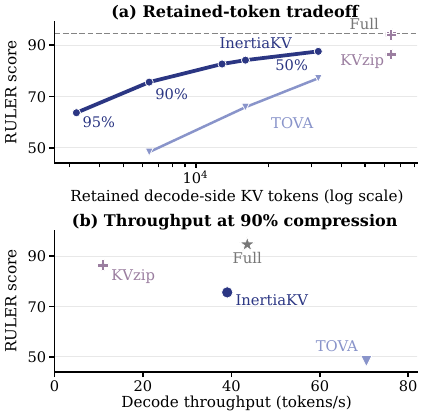}
\caption{\textbf{Quality--cost tradeoffs on Llama-3.1-8B at 64k context}: (a) RULER versus retained decode-side KV tokens across compression ratios; (b) RULER versus decode throughput at 90\% compression. KVzip denotes released masking-based mode evaluated.}
\label{fig:kvzip-pareto}
\end{figure}

\subsection{Temporal Sensitivity}
\label{sec:temporal-sensitivity}

The choice of smoothing strength $\alpha$ controls the boundary-error / lag-error tradeoff introduced in Eq.~\ref{eq:risk-decomposition}. Table~\ref{tab:alpha-sensitivity} sweeps $\alpha$ from 0 (single-step) to 0.95 on Llama-3.1-8B at 90\% compression.

\begin{table}[t]
\centering
\scriptsize
\setlength{\tabcolsep}{3pt}
\begin{tabular*}{\columnwidth}{@{\extracolsep{\fill}}lrrrrrrr}
\toprule
Benchmark & 0.00 & 0.30 & 0.60 & 0.80 & 0.90 & 0.95 & Best \\
\midrule
LongBench & 42.11 & 43.29 & 43.87 & 43.88 & \textbf{44.05} & 43.99 & 0.90 \\
LongBench-v2 & 0.296 & \textbf{0.298} & 0.290 & 0.294 & 0.294 & 0.288 & 0.30 \\
RULER & 53.97 & 63.55 & 67.03 & 75.60 & 78.55 & \textbf{78.72} & 0.95 \\
\bottomrule
\end{tabular*}
\caption{\textbf{Alpha sensitivity of \base at 90\% compression on Llama-3.1-8B.} Stronger smoothing improves LongBench and RULER; LongBench-v2 peaks at smaller $\alpha$.}
\label{tab:alpha-sensitivity}
\end{table}

The pattern directly reflects the two-term risk: retrieval-heavy benchmarks prefer strong temporal memory ($\alpha{=}0.90$--$0.95$), LongBench is robust around $0.80$--$0.90$, and LongBench-v2 peaks at $\alpha{=}0.30$, consistent with an adaptation-sensitive minority that penalizes excessive smoothing. We use $\alpha{=}0.80$ as a robustness compromise rather than a per-task optimum. Comparing EMA against cumulative (no-forgetting) aggregation is consistent with this picture: cumulative attention gains on Llama RULER (+2.01) but drops on Qwen RULER ($-$7.66), making EMA the more robust bounded-memory compromise (Appendix~\ref{sec:appendix-temporal}).

\paragraph{Why online adaptation did not help.}
We also tested a theory-derived adaptive selector that estimates noise and drift during an initial warmup. It almost always selects $\alpha{=}0.8$ or $0.95$ and cannot recover the low-$\alpha$ preference of LongBench-v2. During warmup, the noise-to-drift ratio is large ($\eta{=}50$--$600$), so the selector favors strong smoothing; useful relevance shifts arrive later. A reactive rank-disagreement trigger is also insufficient: disagreement correlates only 0.14 with alpha sensitivity and does not identify whether $\alpha$ should increase or decrease. Appendix~\ref{sec:adaptive-alpha} gives the full negative result. This suggests that successful adaptation requires directional detection of relevance shifts, not only early noise/drift estimation.

\paragraph{Short-prefix check.}
Because Score-Free does not accumulate EMA during prefill, prefix length is not an EMA-convergence requirement. On the needle-in-a-haystack stress test with 1k, 2k, 4k, and 8k prefills, Score-Free obtains ROUGE-L-F 0.71 at every length, matching Full Refresh and Lazy4. The more relevant failure mode is a later shift in token relevance: on LongBench-v2, 97.6\% of examples are invariant across temporal rules, while the remaining 2.4\% are adaptation-sensitive.

\subsection{Efficiency Analysis}
\label{sec:efficiency}

The previous sections establish the quality behavior; we now examine efficiency. At 64k context, 90\% compression reduces the logical final decode-side KV footprint from 64,029 to 6,401 retained tokens, an approximately $10\times$ reduction in retained-token count. This logical reduction does not by itself establish a $10\times$ serving-concurrency gain, which additionally depends on allocator behavior, page reclamation, prefill peaks, and batching. \base does not reduce prefill peak memory (29.91\,GB, same as full cache), unlike TOVA (22.88\,GB) which evicts during prefill, but achieves a similar final retained-token count (6,401 vs.\ 6,428). In the evaluated masking-based mode, KVzip retains 64,029 decode-side tokens and incurs $6.5\times$ prefill cost (31.54\,s vs.\ 4.86\,s). \method further reduces decode cost via lazy refresh, scoring only every fourth step and cutting layer-scoring operations by $4\times$ (Figure~\ref{fig:lazy-tradeoff} in Appendix~\ref{sec:appendix-details}).

\section{Conclusion}
Under the tested aggressive-compression settings, EMA makes approximately order-preserving scorer modifications largely indistinguishable at the eviction-set level, but this conclusion does not extend to scorers that substantially alter utility ranking. This distinction motivates \lazyfour, which offers 1.34--1.46$\times$ decode throughput relative to full-refresh InertiaKV with little quality change. Score-Free is a separate empirical endpoint: scoring once at the first decode step and freezing the ranking yields an average $\Delta{=}+0.03$, with one significant small regression on MultiNews. The results redirect attention toward temporal aggregation without claiming that scoring quality is generally irrelevant. Adaptive $\alpha$, relevance-shift detection, and competitive non-attention scorers remain open directions.

\section*{Limitations}
\base and \method are decode-only compressors that target the decode/time-between-tokens bottleneck; they do not reduce prefill peak memory and should be composed with a prefill-time compressor when prefill memory is binding. Score-Free uses only the first decode-step ranking and therefore may fail when relevance shifts later in generation; its stable 1k--8k needle-in-a-haystack result does not establish robustness for multi-turn dialogue or short-prompt, long-generation workloads. At 90\% compression, Mistral-7B and Qwen2.5-14B degrade by 17--27 RULER points mainly on multi-needle and distant-token retrieval. Because \lazyfour closely tracks full refresh on these models, the observed limitation lies in the underlying scorer/task interaction rather than lazy temporal aggregation. Because the implementation updates EMA per eligible layer, $\alpha$ couples layer weighting and temporal retention, limiting purely temporal interpretation and potentially architecture transfer. EMA is also not a universal wrapper: applying it to other scoring rules produces only marginal changes (Appendix~\ref{sec:appendix-transfer}). Finally, our evaluation is English-only and does not rule out a competitive non-attention scorer whose ranking differs from attention while retaining task utility.

\section*{Acknowledgments}
We thank the anonymous reviewers and the area chair for their valuable feedback and constructive suggestions. This research was jointly supported by Alibaba Group and the Excellence Funds of Universität Hamburg.

\bibliography{custom}

\begin{thebibliography}{27}
\providecommand{\natexlab}[1]{#1}

\bibitem[{Ainslie et~al.(2023)Ainslie, Lee-Thorp, de~Jong, Zemlyanskiy, Lebron,
  and Sanghai}]{ainslie2023gqa}
Joshua Ainslie, James Lee-Thorp, Michiel de~Jong, Yury Zemlyanskiy, Federico
  Lebron, and Sumit Sanghai. 2023.
\newblock \href {https://doi.org/10.18653/v1/2023.emnlp-main.298} {{GQA}:
  Training generalized multi-query transformer models from multi-head
  checkpoints}.
\newblock In \emph{Proceedings of the 2023 Conference on Empirical Methods in
  Natural Language Processing}, pages 4895--4901, Singapore. Association for
  Computational Linguistics.

\bibitem[{Bai et~al.(2024)Bai, Lv, Zhang, Lyu, Tang, Huang, Du, Liu, Zeng, Hou,
  Dong, Tang, and Li}]{bai2023longbench}
Yushi Bai, Xin Lv, Jiajie Zhang, Hongchang Lyu, Jiankai Tang, Zhidian Huang,
  Zhengxiao Du, Xiao Liu, Aohan Zeng, Lei Hou, Yuxiao Dong, Jie Tang, and
  Juanzi Li. 2024.
\newblock \href {https://doi.org/10.18653/v1/2024.acl-long.172} {{LongBench}: A
  bilingual, multitask benchmark for long context understanding}.
\newblock In \emph{Proceedings of the 62nd Annual Meeting of the Association
  for Computational Linguistics (Volume 1: Long Papers)}, pages 3119--3137,
  Bangkok, Thailand. Association for Computational Linguistics.

\bibitem[{Bai et~al.(2025)Bai, Tu, Zhang, Peng, Wang, Lv, Cao, Xu, Hou, Dong,
  Tang, and Li}]{bai2024longbenchv2}
Yushi Bai, Shangqing Tu, Jiajie Zhang, Hao Peng, Xiaozhi Wang, Xin Lv, Shulin
  Cao, Jiazheng Xu, Lei Hou, Yuxiao Dong, Jie Tang, and Juanzi Li. 2025.
\newblock \href {https://doi.org/10.18653/v1/2025.acl-long.183} {{LongBench
  v2}: Towards deeper understanding and reasoning on realistic long-context
  multitasks}.
\newblock In \emph{Proceedings of the 63rd Annual Meeting of the Association
  for Computational Linguistics (Volume 1: Long Papers)}, pages 3639--3664,
  Vienna, Austria. Association for Computational Linguistics.

\bibitem[{Behnam et~al.(2025)Behnam, Fu, Zhao, Tsai, Yu, and
  Tumanov}]{behnam2025rocketkv}
Payman Behnam, Yaosheng Fu, Ritchie Zhao, Po-An Tsai, Zhiding Yu, and Alexey
  Tumanov. 2025.
\newblock \href {https://proceedings.mlr.press/v267/behnam25a.html}
  {{RocketKV}: Accelerating long-context {LLM} inference via two-stage {KV}
  cache compression}.
\newblock In \emph{Proceedings of the 42nd International Conference on Machine
  Learning}, volume 267 of \emph{Proceedings of Machine Learning Research},
  pages 3358--3392, Vancouver, British Columbia, Canada. PMLR.

\bibitem[{Brandon et~al.(2024)Brandon, Mishra, Nrusimha, Panda, and
  Ragan-Kelley}]{brandon2024cla}
William Brandon, Mayank Mishra, Aniruddha Nrusimha, Rameswar Panda, and
  Jonathan Ragan-Kelley. 2024.
\newblock \href
  {https://proceedings.neurips.cc/paper_files/paper/2024/hash/9e23d020c18e4c40d81c6a0fc7a46f68-Abstract-Conference.html}
  {Reducing transformer key-value cache size with cross-layer attention}.
\newblock In \emph{Advances in Neural Information Processing Systems 37}, pages
  86927--86945, Vancouver, British Columbia, Canada. Curran Associates, Inc.

\bibitem[{Cai et~al.(2025)Cai, Zhang, Gao, Liu, Li, Liu, Lu, Xiong, Dong, Hu,
  and Xiao}]{pyramidkv2024}
Zefan Cai, Yichi Zhang, Bofei Gao, Yuliang Liu, Yucheng Li, Tianyu Liu, Keming
  Lu, Wayne Xiong, Yue Dong, Junjie Hu, and Wen Xiao. 2025.
\newblock \href {https://openreview.net/forum?id=ayi7qezU87} {{PyramidKV}:
  Dynamic {KV} cache compression based on pyramidal information funneling}.
\newblock In \emph{Proceedings of the Second Conference on Language Modeling},
  Montreal, Quebec, Canada.

\bibitem[{Dao(2024)}]{dao2023flashattention2}
Tri Dao. 2024.
\newblock \href
  {https://proceedings.iclr.cc/paper_files/paper/2024/hash/98ed250b203d1ac6b24bbcf263e3d4a7-Abstract-Conference.html}
  {{FlashAttention-2}: Faster attention with better parallelism and work
  partitioning}.
\newblock In \emph{The Twelfth International Conference on Learning
  Representations}, pages 35549--35562, Vienna, Austria.

\bibitem[{Feng et~al.(2025)Feng, Lv, Cao, Xie, and Zhou}]{li2024adakv}
Yuan Feng, Junlin Lv, Yukun Cao, Xike Xie, and S.~Kevin Zhou. 2025.
\newblock \href
  {https://proceedings.neurips.cc/paper_files/paper/2025/hash/a40ff56daab9f4808b1e18350c8a11ce-Abstract-Conference.html}
  {{Ada-KV}: Optimizing {KV} cache eviction by adaptive budget allocation for
  efficient {LLM} inference}.
\newblock In \emph{Advances in Neural Information Processing Systems 38}, pages
  113152--113188, San Diego, California, USA; Mexico City, Mexico. Curran
  Associates, Inc.

\bibitem[{Ge et~al.(2024)Ge, Zhang, Liu, Zhang, Han, and Gao}]{ge2024model}
Suyu Ge, Yunan Zhang, Liyuan Liu, Minjia Zhang, Jiawei Han, and Jianfeng Gao.
  2024.
\newblock \href
  {https://proceedings.iclr.cc/paper_files/paper/2024/hash/639a9a172c044fbb64175b5fad42e9a5-Abstract-Conference.html}
  {Model tells you what to discard: Adaptive {KV} cache compression for
  {LLMs}}.
\newblock In \emph{The Twelfth International Conference on Learning
  Representations}, pages 22975--22988, Vienna, Austria.

\bibitem[{Hsieh et~al.(2024)Hsieh, Sun, Kriman, Acharya, Rekesh, Jia, and
  Ginsburg}]{hsieh2024ruler}
Cheng-Ping Hsieh, Simeng Sun, Samuel Kriman, Shantanu Acharya, Dima Rekesh, Fei
  Jia, and Boris Ginsburg. 2024.
\newblock \href {https://openreview.net/forum?id=kIoBbc76Sy} {{RULER}: What's
  the real context size of your long-context language models?}
\newblock In \emph{Proceedings of the First Conference on Language Modeling},
  Philadelphia, Pennsylvania, USA.

\bibitem[{Joshi et~al.(2025)Joshi, Brahma, Liu, and Barsoum}]{joshi2025tada}
Vinay Joshi, Pratik~Prabhanjan Brahma, Zicheng Liu, and Emad Barsoum. 2025.
\newblock \href {https://doi.org/10.18653/v1/2025.acl-industry.101} {{TaDA}:
  Training-free recipe for decoding with adaptive {KV} cache compression and
  mean-centering}.
\newblock In \emph{Proceedings of the 63rd Annual Meeting of the Association
  for Computational Linguistics (Volume 6: Industry Track)}, pages 1435--1443,
  Vienna, Austria. Association for Computational Linguistics.

\bibitem[{Kim et~al.(2025)Kim, Kim, Kwon, Lee, Yun, and Song}]{zhang2025kvzip}
Jang-Hyun Kim, Jinuk Kim, Sangwoo Kwon, Jae~W. Lee, Sangdoo Yun, and Hyun~Oh
  Song. 2025.
\newblock \href
  {https://proceedings.neurips.cc/paper_files/paper/2025/hash/f4eaa4b8f2d08edb3f0af990d56134ea-Abstract-Conference.html}
  {{KVzip}: Query-agnostic {KV} cache compression with context reconstruction}.
\newblock In \emph{Advances in Neural Information Processing Systems 38}, pages
  167563--167591, San Diego, California, USA; Mexico City, Mexico. Curran
  Associates, Inc.

\bibitem[{Lee et~al.(2024)Lee, Lee, Seo, and Sim}]{lee2024infinigen}
Wonbeom Lee, Jungi Lee, Junghwan Seo, and Jaewoong Sim. 2024.
\newblock \href {https://www.usenix.org/conference/osdi24/presentation/lee}
  {{InfiniGen}: Efficient generative inference of large language models with
  dynamic {KV} cache management}.
\newblock In \emph{18th USENIX Symposium on Operating Systems Design and
  Implementation}, pages 155--172, Santa Clara, California, USA. USENIX
  Association.

\bibitem[{Li et~al.(2025)Li, Fu, Sheng, Long, Yu, and Li}]{li2025faedkv}
Runchao Li, Yao Fu, Mu~Sheng, Xianxuan Long, Haotian Yu, and Pan Li. 2025.
\newblock \href {https://doi.org/10.18653/v1/2025.findings-emnlp.914}
  {{FAEDKV}: Infinite-window fourier transform for unbiased {KV} cache
  compression}.
\newblock In \emph{Findings of the Association for Computational Linguistics:
  EMNLP 2025}, pages 16856--16866, Suzhou, China. Association for Computational
  Linguistics.

\bibitem[{Li et~al.(2024)Li, Huang, Yang, Venkitesh, Locatelli, Ye, Cai, Lewis,
  and Chen}]{xia2024snapkv}
Yuhong Li, Yingbing Huang, Bowen Yang, Bharat Venkitesh, Acyr Locatelli,
  Hanchen Ye, Tianle Cai, Patrick Lewis, and Deming Chen. 2024.
\newblock \href
  {https://proceedings.neurips.cc/paper_files/paper/2024/hash/28ab418242603e0f7323e54185d19bde-Abstract-Conference.html}
  {{SnapKV}: {LLM} knows what you are looking for before generation}.
\newblock In \emph{Advances in Neural Information Processing Systems 37}, pages
  22947--22970, Vancouver, British Columbia, Canada. Curran Associates, Inc.

\bibitem[{Liu et~al.(2023)Liu, Desai, Liao, Wang, Xie, Xu, Kyrillidis, and
  Shrivastava}]{liu2023scissorhands}
Zichang Liu, Aditya Desai, Fangshuo Liao, Weitao Wang, Victor Xie, Zhaozhuo Xu,
  Anastasios Kyrillidis, and Anshumali Shrivastava. 2023.
\newblock \href
  {https://proceedings.neurips.cc/paper_files/paper/2023/hash/a452a7c6c463e4ae8fbdc614c6e983e6-Abstract-Conference.html}
  {Scissorhands: Exploiting the persistence of importance hypothesis for {LLM}
  {KV} cache compression at test time}.
\newblock In \emph{Advances in Neural Information Processing Systems 36}, pages
  52342--52364, New Orleans, Louisiana, USA. Curran Associates, Inc.

\bibitem[{Liu et~al.(2024)Liu, Yuan, Jin, Zhong, Xu, Braverman, Chen, and
  Hu}]{liu2024kivi}
Zirui Liu, Jiayi Yuan, Hongye Jin, Shaochen Zhong, Zhaozhuo Xu, Vladimir
  Braverman, Beidi Chen, and Xia Hu. 2024.
\newblock \href {https://proceedings.mlr.press/v235/liu24bi.html} {{KIVI}: A
  tuning-free asymmetric 2bit quantization for {KV} cache}.
\newblock In \emph{Proceedings of the 41st International Conference on Machine
  Learning}, volume 235 of \emph{Proceedings of Machine Learning Research},
  pages 32332--32344, Vienna, Austria. PMLR.

\bibitem[{{Llama Team, AI at Meta}(2024)}]{grattafiori2024llama3}
{Llama Team, AI at Meta}. 2024.
\newblock \href {https://arxiv.org/abs/2407.21783} {The {Llama} 3 herd of
  models}.
\newblock \emph{arXiv preprint arXiv:2407.21783}.

\bibitem[{Oren et~al.(2024)Oren, Hassid, Yarden, Adi, and
  Schwartz}]{xiao2024tova}
Matanel Oren, Michael Hassid, Nir Yarden, Yossi Adi, and Roy Schwartz. 2024.
\newblock \href {https://doi.org/10.18653/v1/2024.emnlp-main.1043}
  {Transformers are multi-state {RNNs}}.
\newblock In \emph{Proceedings of the 2024 Conference on Empirical Methods in
  Natural Language Processing}, pages 18724--18741, Miami, Florida, USA.
  Association for Computational Linguistics.

\bibitem[{Park et~al.(2025)Park, Jones, Morse, Goel, Lee, and
  Lott}]{liu2025keydiff}
Junyoung Park, Dalton Jones, Matthew Morse, Raghavv Goel, Mingu Lee, and
  Christopher Lott. 2025.
\newblock \href
  {https://proceedings.neurips.cc/paper_files/paper/2025/hash/0907335ecf28faf15be54485dbcbe70e-Abstract-Conference.html}
  {{KeyDiff}: Key similarity-based {KV} cache eviction for long-context {LLM}
  inference in resource-constrained environments}.
\newblock In \emph{Advances in Neural Information Processing Systems 38}, pages
  5983--6019, San Diego, California, USA; Mexico City, Mexico. Curran
  Associates, Inc.

\bibitem[{{Qwen Team}(2024)}]{yang2024qwen2}
{Qwen Team}. 2024.
\newblock \href {https://arxiv.org/abs/2412.15115} {{Qwen2.5} technical
  report}.
\newblock \emph{arXiv preprint arXiv:2412.15115}.

\bibitem[{Sun et~al.(2025)Sun, Chang, Bao, Zheng, Zheng, Liu, Dong, Chi, and
  Chen}]{sun2025shadowkv}
Hanshi Sun, Li-Wen Chang, Wenlei Bao, Size Zheng, Ningxin Zheng, Xin Liu, Harry
  Dong, Yuejie Chi, and Beidi Chen. 2025.
\newblock \href {https://proceedings.mlr.press/v267/sun25b.html} {{ShadowKV}:
  {KV} cache in shadows for high-throughput long-context {LLM} inference}.
\newblock In \emph{Proceedings of the 42nd International Conference on Machine
  Learning}, volume 267 of \emph{Proceedings of Machine Learning Research},
  pages 57355--57373, Vancouver, British Columbia, Canada. PMLR.

\bibitem[{Tang et~al.(2024)Tang, Zhao, Zhu, Xiao, Kasikci, and
  Han}]{tang2024quest}
Jiaming Tang, Yilong Zhao, Kan Zhu, Guangxuan Xiao, Baris Kasikci, and Song
  Han. 2024.
\newblock \href {https://proceedings.mlr.press/v235/tang24h.html} {{Quest}:
  Query-aware sparsity for efficient long-context {LLM} inference}.
\newblock In \emph{Proceedings of the 41st International Conference on Machine
  Learning}, volume 235 of \emph{Proceedings of Machine Learning Research},
  pages 47901--47911, Vienna, Austria. PMLR.

\bibitem[{Vaswani et~al.(2017)Vaswani, Shazeer, Parmar, Uszkoreit, Jones,
  Gomez, Kaiser, and Polosukhin}]{vaswani2017attention}
Ashish Vaswani, Noam Shazeer, Niki Parmar, Jakob Uszkoreit, Llion Jones,
  Aidan~N. Gomez, {\L}ukasz Kaiser, and Illia Polosukhin. 2017.
\newblock \href
  {https://proceedings.neurips.cc/paper_files/paper/2017/hash/3f5ee243547dee91fbd053c1c4a845aa-Abstract.html}
  {Attention is all you need}.
\newblock In \emph{Advances in Neural Information Processing Systems 30}, pages
  5998--6008, Long Beach, California, USA. Curran Associates, Inc.

\bibitem[{Xiao et~al.(2024)Xiao, Tian, Chen, Han, and
  Lewis}]{liu2023streamingllm}
Guangxuan Xiao, Yuandong Tian, Beidi Chen, Song Han, and Mike Lewis. 2024.
\newblock \href
  {https://proceedings.iclr.cc/paper_files/paper/2024/hash/5e5fd18f863cbe6d8ae392a93fd271c9-Abstract-Conference.html}
  {Efficient streaming language models with attention sinks}.
\newblock In \emph{The Twelfth International Conference on Learning
  Representations}, pages 21875--21895, Vienna, Austria.

\bibitem[{Yu and Chai(2025)}]{yu2025evolkv}
Bohan Yu and Yekun Chai. 2025.
\newblock \href {https://doi.org/10.18653/v1/2025.findings-emnlp.88} {{EvolKV}:
  Evolutionary {KV} cache compression for {LLM} inference}.
\newblock In \emph{Findings of the Association for Computational Linguistics:
  EMNLP 2025}, pages 1673--1689, Suzhou, China. Association for Computational
  Linguistics.

\bibitem[{Zhang et~al.(2023)Zhang, Sheng, Zhou, Chen, Zheng, Cai, Song, Tian,
  R{\'e}, Barrett, Wang, and Chen}]{h2o2023}
Zhenyu Zhang, Ying Sheng, Tianyi Zhou, Tianlong Chen, Lianmin Zheng, Ruisi Cai,
  Zhao Song, Yuandong Tian, Christopher R{\'e}, Clark Barrett, Zhangyang Wang,
  and Beidi Chen. 2023.
\newblock \href
  {https://proceedings.neurips.cc/paper_files/paper/2023/hash/6ceefa7b15572587b78ecfcebb2827f8-Abstract-Conference.html}
  {{H$_2$O}: Heavy-hitter oracle for efficient generative inference of large
  language models}.
\newblock In \emph{Advances in Neural Information Processing Systems 36}, pages
  48951--48971, New Orleans, Louisiana, USA. Curran Associates, Inc.

\end{thebibliography}

\appendix

\section{Proof of Proposition~\ref{prop:rank-stability}}
\label{sec:appendix-proofs}

\begin{proof}
\textbf{(a)} Expanding the EMA recurrence gives $\mu_T(i) = (1{-}\alpha)\sum_{\tau=1}^{T}\alpha^{T-\tau}s_\tau(i) + \alpha^T \mu_0(i)$. Taking expectations, $\mathbb{E}[\mu_T(i)] = (1-\alpha^T)\bar{s}(i) + \alpha^T\mu_0(i) \to \bar{s}(i)$, so the expected gap converges to $\delta_{ij}$.
\textbf{(b)} By temporal independence, $\mathrm{Var}[\mu_T(i)] \leq (1{-}\alpha)^2\sigma^2\sum_{k=0}^{T-1}\alpha^{2k} \leq \frac{1-\alpha}{1+\alpha}\sigma^2$; independence across tokens makes the gap variance at most twice this quantity.
\textbf{(c)} Let $D_T=\mu_T(i)-\mu_T(j)$. By part (a), $\mathbb{E}[D_T]\to\delta_{ij}>0$. Applying Chebyshev's inequality to $D_T-\mathbb{E}[D_T]$ and then taking $\limsup_{T\to\infty}$, together with part (b), yields the stated asymptotic bound.
The i.i.d.\ and cross-token independence assumptions are for tractability. Real attention sequences are temporally dependent, so the proposition is an explanatory bound rather than a direct certificate for the empirical traces.
\end{proof}

\section{Controlled Auxiliary Ablations}
\label{sec:appendix-controlled}
The main paper treats static reserve and first-eviction rescue mechanisms as auxiliary design checks rather than as part of the method claim. Table~\ref{tab:method-delta} summarizes the controlled comparison. The auxiliary variant uses the same temporal accumulation rule as \base, but additionally protects a static reserve and applies a first-large-eviction rescue rule. These mechanisms were intended to protect coarse positional coverage and avoid early catastrophic eviction, but under the aggressive budgets studied here they do not reliably compose with the bounded-memory utility estimator.

\begin{table}[t]
\centering
\scriptsize
\setlength{\tabcolsep}{4pt}
\begin{tabular}{lcc}
\toprule
Mechanism & Auxiliary & \base \\
\midrule
Query-aware decoding score & \checkmark & \checkmark \\
Temporal accumulation & \checkmark & \checkmark \\
Static reserve & \checkmark & -- \\
First-eviction rescue & \checkmark & -- \\
Default $\alpha$ & 0.8 & 0.8 \\
\bottomrule
\end{tabular}
\caption{Mechanism-level delta between the controlled auxiliary variant and the final \base scorer. The auxiliary variant tests whether static reserve and first-eviction rescue compose with EMA temporal utility aggregation; it is not treated as an external baseline.}
\label{tab:method-delta}
\end{table}

A compression sweep shows that the gap between \base and the auxiliary variant grows as the budget becomes tighter on LongBench and RULER (Table~\ref{tab:sweep}). LongBench-v2 remains mixed, which is consistent with the main paper's interpretation that this benchmark exposes the lag side of temporal aggregation rather than supporting a universal design law.

\begin{table}[t]
\centering
\tiny
\setlength{\tabcolsep}{2pt}
\begin{tabular*}{\columnwidth}{@{\extracolsep{\fill}}llrrrr}
\toprule
Benchmark & CR & Auxiliary & w/ Reserve & \base & \base--Aux. \\
\midrule
LongBench & 0.75 & 44.04 & 46.06 & 45.07 & +1.03 \\
LongBench & 0.80 & 43.61 & 44.80 & 44.92 & +1.30 \\
LongBench & 0.90 & 42.13 & 41.95 & 43.88 & +1.75 \\
LongBench & 0.95 & 40.13 & 42.84 & 42.82 & +2.68 \\
\midrule
LongBench-v2 & 0.75 & 0.292 & 0.294 & 0.288 & -0.004 \\
LongBench-v2 & 0.80 & 0.298 & 0.296 & 0.290 & -0.008 \\
LongBench-v2 & 0.90 & 0.290 & 0.294 & 0.294 & +0.004 \\
LongBench-v2 & 0.95 & 0.304 & 0.294 & 0.296 & -0.008 \\
\midrule
RULER & 0.75 & 67.97 & 84.05 & 84.16 & +16.19 \\
RULER & 0.80 & 65.09 & 82.00 & 82.65 & +17.56 \\
RULER & 0.90 & 53.73 & 73.99 & 75.60 & +21.87 \\
RULER & 0.95 & 40.86 & 61.93 & 63.66 & +22.80 \\
\bottomrule
\end{tabular*}
\caption{Compression sweep on Llama-3.1-8B. The no-reserve \base design becomes increasingly stronger than the controlled auxiliary variant as compression becomes more aggressive, especially on RULER.}
\label{tab:sweep}
\end{table}

\begin{figure}[t]
\centering
\includegraphics[width=\columnwidth]{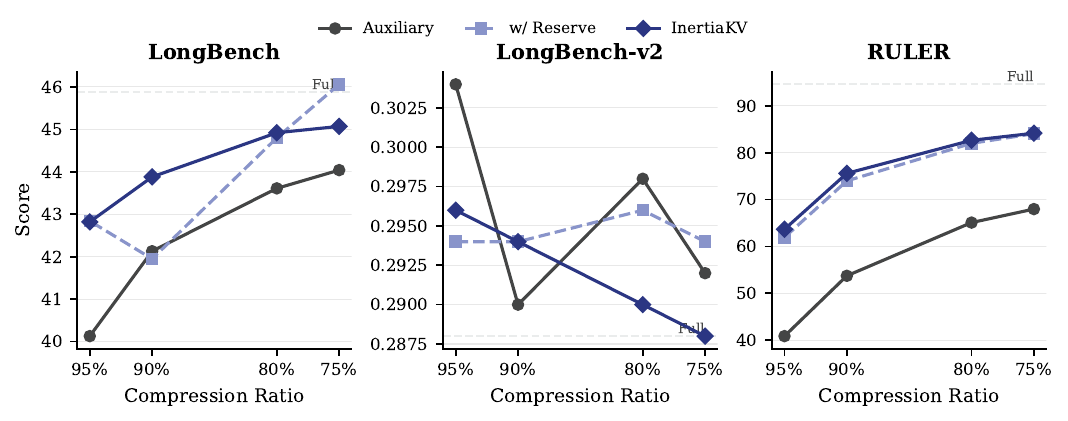}
\caption{Compression sweep visualization of Table~\ref{tab:sweep}. The gap between \base and the Auxiliary variant widens monotonically on LongBench and RULER as compression becomes more aggressive; LongBench-v2 remains mixed.}
\label{fig:compression-sweep}
\end{figure}

Table~\ref{tab:ablation} reports nearby operating-point variants at 90\% compression. Among these controlled variants, the no-reserve \base scorer is the strongest choice on LongBench and RULER. Removing momentum is especially harmful on RULER, while LongBench-v2 remains close across variants.

\begin{table}[t]
\centering
\scriptsize
\setlength{\tabcolsep}{3pt}
\begin{tabular*}{\columnwidth}{@{\extracolsep{\fill}}lrrrrr}
\toprule
Benchmark & Full & Reserve & No Mom. & \base & High Res. \\
\midrule
LongBench & 45.87 & 41.95 & 42.22 & \textbf{43.88} & 43.67 \\
LongBench-v2 & 0.288 & 0.294 & 0.296 & 0.294 & \textbf{0.298} \\
RULER & 94.59 & 73.99 & 53.61 & \textbf{75.60} & 71.18 \\
\bottomrule
\end{tabular*}
\caption{Auxiliary variants at 90\% compression on Llama-3.1-8B. These results motivate the base operating point but are not the main contribution.}
\label{tab:ablation}
\end{table}

\section{Auxiliary Generalization Checks}
\label{sec:appendix-auxiliary}
Having established the core method configuration in Appendix~\ref{sec:appendix-controlled}, we now test generalization beyond the main LongBench/RULER evaluation.

We use a needle-in-a-haystack retrieval grid as an auxiliary stress test even though the main claims rely on averaged LongBench, LongBench-v2, RULER, and profiling results. ROUGE-L-F is the most useful metric in this setup; exact match and hit rate are not informative. Figure~\ref{fig:needle} reports a 12-context-length (10k--120k) by 9-depth heatmap on Llama-3.1-8B. Full Cache is included as a reference, while compressed methods use the same 90\% setting. We use this grid as a fixed-setting stress check rather than as a benchmark for selecting the best alpha or refresh interval.

\begin{figure*}[t]
\centering
\includegraphics[width=\textwidth]{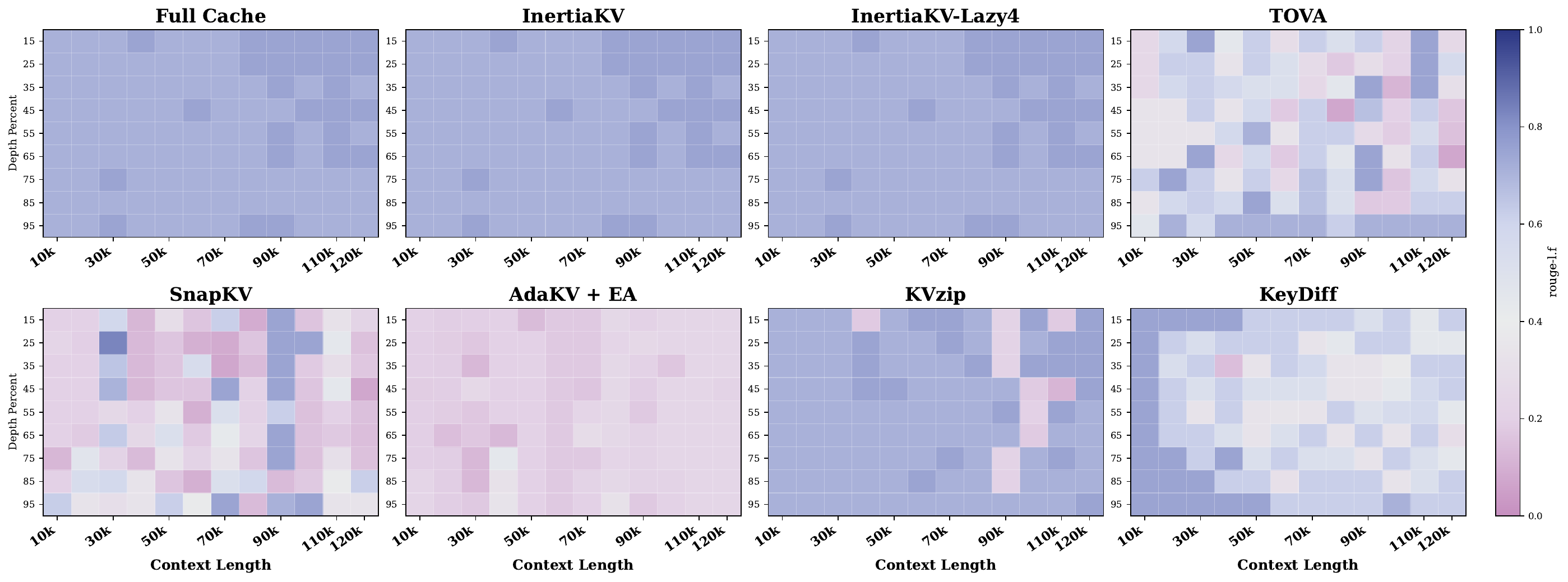}
\caption{Auxiliary needle-in-a-haystack heatmap on Llama-3.1-8B using ROUGE-L-F (12 context lengths $\times$ 9 depths). Full Cache is shown as a reference; compressed methods use 90\% compression. \base and \lazyfour closely track the full-cache pattern, while comparison baselines (TOVA, SnapKV, AdaKV+EA, KVzip, KeyDiff) degrade under this aggressive retrieval stress test.}
\label{fig:needle}
\end{figure*}

For a small non-dense stress check, we additionally evaluate a Qwen3-30B-A3B MoE model (Table~\ref{tab:moe}). This is not a full MoE benchmark suite. The same auxiliary-mechanism issue appears in this setting, although the method remains below full-cache quality.

\begin{table}[t]
\centering
\scriptsize
\begin{tabular*}{\columnwidth}{@{\extracolsep{\fill}}lrrrr}
\toprule
Benchmark & Full & Auxiliary & TOVA & \base \\
\midrule
LongBench subset & 44.91 & 40.39 & 37.30 & \textbf{42.32} \\
RULER & 95.51 & 38.28 & 33.56 & \textbf{66.26} \\
\bottomrule
\end{tabular*}
\caption{MoE stress check on Qwen3-30B-A3B at 90\% compression. The controlled auxiliary variant again underperforms \base.}
\label{tab:moe}
\end{table}

Table~\ref{tab:stress-transfer} reports two additional stress tests. These are not part of the main claim. They ask whether the same 90\% compression operating point transfers to a different dense architecture and to a larger Qwen model. The answer is mixed and mostly negative: \lazyfour continues to track the full-refresh \base estimator, but the estimator itself can degrade substantially on RULER. Subtask inspection shows that this degradation is concentrated in multi-key, multi-query, and context-word retrieval cases, while many single-key and QA subtasks remain much closer to full cache. On Qwen2.5-14B, for example, the 4k RULER average falls from 95.92 to 58.80 because \texttt{niah\_multikey\_3}, \texttt{niah\_multikey\_2}, and \texttt{niah\_multiquery} drop from 100.0, 100.0, and 99.8 to 0.0, 4.8, and 10.45; at 16k, the corresponding drops are 99.8$\rightarrow$2.0, 99.4$\rightarrow$57.8, and 99.75$\rightarrow$56.45. \lazyfour differs from full refresh by at most 0.65 points on these failed subtask averages, indicating that lazy refresh is not the primary source of the degradation. H2O-style cumulative scoring partially recovers some multi-evidence cases, indicating that no-forgetting accumulation can better preserve persistent evidence, but it still fails on the hardest multikey cases. We therefore treat these results as evidence of a current scorer limitation rather than as positive architecture or scale generalization. We omit the default DecodingPress StreamingLLM runs from this table because that wrapper uses a fixed decode interval and target size rather than the same 90\% compression-ratio protocol, and in these runs it often behaved as a near no-op.

\begin{table}[t]
\centering
\scriptsize
\setlength{\tabcolsep}{3pt}
\begin{tabular*}{\columnwidth}{@{\extracolsep{\fill}}llrrrr}
\toprule
Model & Benchmark & Full & \base & Lazy4 & H2O-style \\
\midrule
Mistral-7B & LongBench & 43.04 & 40.71 & 40.77 & 40.69 \\
Mistral-7B & LongBench-v2$^\dagger$ & 0.282 & 0.286 & 0.282 & 0.288 \\
Mistral-7B & RULER & 89.38 & 72.76 & 72.85 & 75.10 \\
\midrule
Qwen2.5-14B & LongBench & 48.68 & 46.01 & 46.11 & 46.03 \\
Qwen2.5-14B & LongBench-v2 & 0.328 & 0.326 & 0.326 & 0.330 \\
Qwen2.5-14B & RULER & 94.50 & 67.70 & 67.72 & 74.86 \\
\bottomrule
\end{tabular*}
\caption{Architecture and scale stress tests at 90\% compression. \lazyfour remains close to full-refresh \base, but the underlying scorer is not robust on RULER multi-evidence retrieval for these backbones. $^\dagger$Mistral LongBench-v2 is evaluated with a 32k context cap because the uncapped run exceeded single-GPU prefill memory.}
\label{tab:stress-transfer}
\end{table}

\section{Additional Transfer Results}
\label{sec:appendix-transfer}
Appendix~\ref{sec:appendix-auxiliary} showed mixed transfer of the \base scorer across architectures, with substantial limitations on multi-evidence retrieval. A separate question is whether the EMA temporal wrapper itself transfers to other scoring rules.

Our fixed-budget transfer experiments test whether a generic EMA wrapper can improve other decoding-time rules beyond the \base design. These experiments use explicit target sizes rather than nominal compression ratios, since decoding-time wrappers are controlled by budget and compression interval rather than by a prefill-style top-$k$ ratio.

Table~\ref{tab:appendix-ema} shows the transfer results. The overall picture is weak: EMA produces only small changes for TOVA, KeyDiff, and KNorm under fixed budgets. This is not a contradiction of the main result; it reinforces our scope. Temporal smoothing is useful in the studied TOVA-style irreversible eviction setting, but it is not a universal wrapper that automatically improves every scoring rule.

\begin{table*}[t]
\centering
\scriptsize
\setlength{\tabcolsep}{3pt}
\begin{tabular*}{\textwidth}{@{\extracolsep{\fill}}llrrrrrrrr}
\toprule
Benchmark & Budget & TOVA & TOVA+EMA(0.6) & TOVA+EMA(0.8) & TOVA+EMA(0.9) & KeyDiff & KeyDiff+EMA & KNorm & KNorm+EMA \\
\midrule
LongBench subset & 1024 & 53.67 & 53.78 & 53.73 & 53.69 & 48.56 & 48.64 & 27.39 & 27.61 \\
RULER & 1024 & 80.01 & 79.98 & 79.95 & 79.96 & 68.60 & 68.58 & 22.73 & 22.62 \\
LongBench subset & 2048 & 53.95 & 54.02 & 53.95 & 54.05 & 53.06 & 53.05 & 42.22 & 42.47 \\
RULER & 2048 & 88.92 & 88.95 & 88.92 & 88.92 & 80.93 & 81.01 & 43.51 & 44.01 \\
\bottomrule
\end{tabular*}
\caption{Fixed-budget EMA transfer results. EMA smoothing yields only small changes across these decoding-time rules, so we treat transfer as weak supporting evidence rather than as a primary contribution.}
\label{tab:appendix-ema}
\end{table*}

\section{Implementation Details}
\label{sec:appendix-implementation}

This appendix specifies key implementation choices that are not fully explicit in the main text. \base and \method are decode-only compressors: they do not reduce prefill peak memory, and their primary effect is on the final decode-side cache size and decode-time score-update cost.

\paragraph{Per-step scoring.} At each decode step, we extract the attention weights from the last query token to all cached positions: $a_t \in \mathbb{R}^{H \times L_t}$, where $H$ is the number of attention heads and $L_t$ is the current cache length. The per-step score is obtained by mean-pooling across heads: $s_t(i) = \frac{1}{H}\sum_{h=1}^H a_t^{(h)}(i)$. For GQA architectures (Llama-3.1), each KV head's attention is shared across its corresponding query-head group before pooling. Optional VNorm weighting multiplies each head's attention by the mean-normalized $\ell_2$ norm of the value states before pooling; entropy weighting multiplies each layer's contribution by the normalized entropy of its attention distribution.

\paragraph{Momentum initialization ($m_0$).} The momentum state is initialized to zeros for all positions: $m_0(i) = 0$ for $i = 1, \ldots, L_0$. This means that the first decode step's score determines the initial ranking entirely, and the expected momentum begins moving toward its stationary mean from step 1.

\paragraph{New token insertion during non-refresh steps.} When a new token is generated during a non-refresh step (i.e., $t \notin \mathcal{R}_r$ in \method), its momentum score is initialized to the mean momentum of the $L_t-1$ existing tokens: $m_t(\mathrm{new})=\frac{1}{L_t-1}\sum_{i=1}^{L_t-1}m_{t-1}(i)$. This places the new token at an average ranking position, allowing it to be scored and ranked at the next refresh. The new token can still be evicted on the same step if the budget is exceeded.

\paragraph{Layer scoring.} Let $K$ denote the number of eligible layers. Each eligible layer immediately updates the shared state: $m_{t,\ell}=\alpha m_{t,\ell-1}+(1-\alpha)s_{t,\ell}$ for $\ell=1,\ldots,K$, with $m_{t,0}=m_{t-1,K}$. Thus the prior step has weight $\alpha^K$ and layer $\ell$ has weight $(1-\alpha)\alpha^{K-\ell}$; layers are not uniformly averaged. For Lazy refresh, the layer hooks are deactivated on non-refresh steps, so no attention is captured and the momentum state remains frozen.

\paragraph{Eviction on non-refresh steps.} Even when scores are not refreshed, eviction still occurs if the cache exceeds the budget (e.g., after a new token is added). The stale momentum $m_{t-1}$ is used for the eviction decision. This ensures the budget constraint is always satisfied regardless of the refresh schedule.

\section{Benchmark-Level Interpretation Guide}
\label{sec:appendix-benchmarks}

Each benchmark in our evaluation tests a different aspect of the temporal aggregation tradeoff. We provide brief guidance on how to read results from each.

\paragraph{LongBench-v2.}
LongBench-v2 is consistently more mixed than LongBench and RULER. Alpha sensitivity, pairwise flip analysis, and the failure-anatomy statistics indicate that lower smoothing is preferable on a small but meaningful subset of examples, especially on Qwen2.5-7B. Most examples (95--98\%) are invariant across temporal scoring choices. We treat LongBench-v2 as a boundary case that exposes the \emph{lag} side of the tradeoff rather than as evidence that EMA is broadly harmful.

\paragraph{Needle-in-a-haystack stress test.}
ROUGE-L-F is the only informative metric in our setup; exact match and hit rate are not discriminative. We use this grid as a qualitative retrieval stress check (Figure~\ref{fig:needle}) rather than as a core averaged benchmark.

\paragraph{MoE.}
The Qwen3-30B-A3B experiment (Table~\ref{tab:moe}) is a minimal stress check. Its role is to check whether the auxiliary-mechanism pattern (static reserve hurts under aggressive budgets) also appears beyond dense backbones.

\paragraph{Architecture and scale stress tests.}
The Mistral-7B and Qwen2.5-14B results (Table~\ref{tab:stress-transfer}) should be read as identifying a \emph{scorer limitation}, not a temporal aggregation failure. In both cases, \lazyfour remains faithful to the full-refresh estimator (within 0.1 points), showing that lazy refresh is not the primary degradation source. The degradation is concentrated in RULER multi-evidence retrieval and appears tied to the full-refresh attention-derived scorer/task interaction under the evaluated eviction framework. On Mistral, increasing $\alpha$ from 0.8 to 0.9 improves RULER from 72.76 to 76.79, partially mitigating but not solving the issue. These results motivate future work on architecture-aware scoring and multi-span budget allocation.

\section{Temporal Sensitivity Analysis}
\label{sec:appendix-temporal}

This appendix characterizes the stability--adaptivity tradeoff across the temporal aggregation family, addressing the central design question: how much temporal memory should a decode-time compressor keep?

\subsection{Alpha Sensitivity}

The main alpha sensitivity sweep is reported in \S\ref{sec:temporal-sensitivity} (Table~\ref{tab:alpha-sensitivity}). Figure~\ref{fig:alpha-tradeoff} visualizes the normalized tradeoff curve.

\begin{figure}[t]
\centering
\includegraphics[width=\columnwidth]{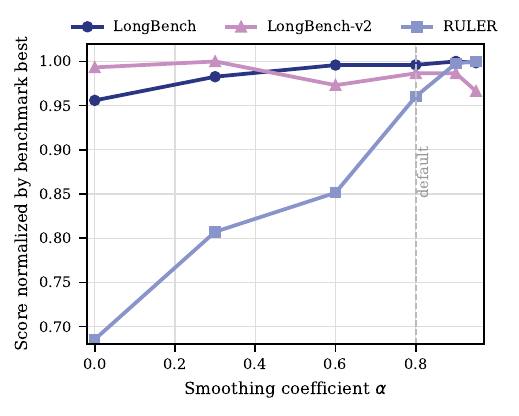}
\caption{Alpha sensitivity visualized. Each benchmark's score is normalized by its best value across the $\alpha$ range. RULER benefits strongly from high $\alpha$; LongBench is relatively flat above $\alpha{=}0.3$; LongBench-v2 slightly prefers lower smoothing. The dashed line marks the default $\alpha{=}0.8$.}
\label{fig:alpha-tradeoff}
\end{figure}

Here we provide cross-model validation. On Qwen2.5-7B, $\alpha=0.90$ is essentially tied with $\alpha=0.80$ on LongBench (46.05 vs.\ 46.01), but is worse on both LongBench-v2 (0.247 vs.\ 0.254) and RULER (78.28 vs.\ 79.43). We therefore keep $\alpha=0.80$ as the default: it is a selected operating point for the studied setting, not a tuned optimum.

LongBench-v2 provides a useful boundary case. Comparing $\alpha=0.30$ against higher smoothing, Qwen2.5-7B loses more examples than it gains: moving to $\alpha=0.80$ loses 13 previously correct examples and recovers 7, while moving to $\alpha=0.90$ loses 17 and recovers 7. Domain-level breakdowns are mixed, but Qwen shows broad declines in long structured data understanding, long-dialogue history understanding, multi-document QA, and single-document QA. This demonstrates the lag side of the tradeoff in the evaluated Qwen setting.

\subsection{Cumulative vs.\ EMA}

Because Eq.~\ref{eq:ema} is a bounded-memory filter, a natural alternative is the cumulative rule in Eq.~\ref{eq:cumulative}, which never forgets past evidence. Table~\ref{tab:h2o-style} tests whether the non-decaying endpoint dominates the bounded-memory endpoint once both are placed inside the same decode-time budgeted framework.

\begin{table}[t]
\centering
\scriptsize
\setlength{\tabcolsep}{4pt}
\begin{tabular}{llr}
\toprule
Model & Benchmark & Cum. - EMA \\
\midrule
Llama-3.1-8B & LongBench & +0.08 \\
Llama-3.1-8B & LongBench-v2 & -0.010 \\
Llama-3.1-8B & RULER & +2.01 \\
\midrule
Qwen2.5-7B & LongBench & -0.01 \\
Qwen2.5-7B & LongBench-v2 & -0.010 \\
Qwen2.5-7B & RULER & -7.66 \\
\bottomrule
\end{tabular}
\caption{Delta of H2O-style cumulative attention over EMA \base at 90\% compression. Cumulative attention is competitive but does not dominate.}
\label{tab:h2o-style}
\end{table}

The result is not one-sided. On Llama, cumulative attention is slightly better on LongBench and RULER but worse on LongBench-v2. On Qwen2.5-7B, it drops substantially on RULER ($-7.66$). Together with the $\alpha=0$ rows in Table~\ref{tab:alpha-sensitivity}, this completes the family picture: single-step scoring is too brittle for retrieval; cumulative accumulation can outperform EMA when evidence persists but degrades when stale retention matters. EMA is therefore not universally best, but it is the most robust bounded-memory compromise.

\section{Adaptive \texorpdfstring{$\alpha$}{alpha}: A Negative Result}
\label{sec:adaptive-alpha}

The alpha sensitivity in Table~\ref{tab:alpha-sensitivity} raises a natural question: can $\alpha$ be selected automatically? We tested a theory-derived adaptive selector that estimates noise ($\sigma$) and drift ($\delta$) during a warmup period, then selects $\alpha$ to minimize the boundary-error--lag tradeoff from Eq.~\ref{eq:risk-decomposition}.

The result is negative. Across both backbones, the selector almost always chooses $\alpha=0.8$ or $\alpha=0.95$, never selecting low $\alpha$. On RULER, this yields similar or slightly worse performance than the fixed $\alpha=0.8$ default; on LongBench-v2, where lower $\alpha$ helps, the selector cannot adapt. The failure mechanism is interpretable: cross-layer Jaccard disagreement ($\sigma \approx 0.07$) is stable, but temporal drift ($\delta \approx 0.0003$--$0.001$) is extremely small during the first 10 decode steps, producing noise-to-drift ratios $\eta = \sigma/\delta$ in the range 50--600 that the theory formula maps to high $\alpha$. Relevance shifts that would benefit from low $\alpha$ occur \emph{after} the warmup window.

This supports retaining $\alpha=0.8$ as a pragmatic default for the evaluated settings and identifies the failure mode a future adaptive policy must address: online detection of relevance shifts during generation, not just noise/drift estimation from initial steps. A reactive strategy---triggering lower $\alpha$ when raw/EMA ranking disagreement exceeds a threshold---is also unreliable: rank-trace diagnostics (Table~\ref{tab:rank-trace}) show that disagreement correlates with alpha-sensitivity (0.14--0.15) but does not reliably identify the correct direction.

\section{Mechanistic Diagnostics and Failure Anatomy}
\label{sec:appendix-diagnostics}

The preceding appendices characterize \emph{what} EMA does at the benchmark level; this appendix probes possible mechanisms by shifting from family-level averages to mechanism-level observables. We trace top-$k$ token sets during decoding on 100 examples per trace set, reporting raw top-$k$ churn, EMA top-$k$ churn, raw/EMA Jaccard overlap, the fraction of raw top-$k$ tokens evicted, and trace accuracy (Table~\ref{tab:rank-trace}).

We use churn as a proxy for the boundary-variance term in Eq.~\ref{eq:risk-decomposition}: high churn indicates score fluctuations can flip keep/drop decisions. Raw/EMA Jaccard and raw-top evicted rate proxy for lag: divergence means the aggregated estimate no longer tracks the current query-aware signal.

\begin{table*}[t]
\centering
\scriptsize
\begin{tabular*}{\textwidth}{@{\extracolsep{\fill}}llrrrrr}
\toprule
Trace Set & $\alpha$ & Raw Churn & EMA Churn & Raw/EMA Jaccard & Raw-Top Evicted & Trace Acc. \\
\midrule
LongBench-v2 & 0.30 & 0.0215 & 0.0005 & 0.9806 & 0.0109 & 0.29 \\
LongBench-v2 & 0.80 & 0.0375 & 0.0005 & 0.9659 & 0.0213 & 0.29 \\
LongBench-v2 & 0.90 & 0.0407 & 0.0005 & 0.9629 & 0.0238 & 0.28 \\
\midrule
RULER & 0.30 & 0.0178 & 0.0055 & 0.9868 & 0.0073 & 0.47 \\
RULER & 0.80 & 0.0244 & 0.0056 & 0.9799 & 0.0121 & 0.62 \\
RULER & 0.90 & 0.0284 & 0.0056 & 0.9762 & 0.0149 & 0.64 \\
\bottomrule
\end{tabular*}
\caption{Rank-trace diagnostics at 90\% compression on Llama-3.1-8B. EMA top-$k$ churn is consistently lower than raw churn. On LongBench-v2, stronger smoothing diverges from raw attention without improving accuracy; on RULER, it improves accuracy.}
\label{tab:rank-trace}
\end{table*}

EMA top-$k$ churn is far lower than raw churn, showing that temporal accumulation stabilizes rankings in these traces. On LongBench-v2, increasing $\alpha$ lowers raw/EMA agreement without improving traced accuracy (consistent with reduced adaptivity). On RULER, stronger smoothing improves traced accuracy (consistent with persistent retrieval evidence).

\paragraph{Failure anatomy.} Across the full 503-example LongBench-v2 split, 97.6\% of Llama-3.1-8B examples and 95.0\% of Qwen2.5-7B examples are invariant across temporal rules. The sensitive subset is small: on Qwen, 17/503 examples are adaptation-lag candidates (lower smoothing correct) versus 7/503 stability-help cases. On Llama, the counts are 3/503 and 1/503. LongBench-v2 thus exposes the lag side on a minority of examples rather than indicating that EMA is broadly harmful.

A larger 300-example Qwen pilot shows that raw/EMA ranking disagreement (correlation 0.14) and raw-top-$k$ eviction rate (0.15) are more informative observables for alpha-sensitivity than boundary margin (0.03) or score entropy (0.07). However, they identify sensitivity better than the correct \emph{direction}: both low-$\alpha$-wins and high-$\alpha$-wins appear in elevated-disagreement bins.

\section{Supplementary Result Details}
\label{sec:appendix-details}

This appendix collects detailed breakdowns and efficiency measurements omitted from the main text for space.

\paragraph{Learned scorer degradation by sequence length.}
The learned bilinear scorer's failure worsens with sequence length. On Llama-3.1-8B RULER, the combined deviation is $\Delta{=}{-}38.0$ at 4k and $\Delta{=}{-}52.2$ at 16k, with needle-in-a-haystack retrieval tasks collapsing from ${\sim}100$ to single digits (e.g., \texttt{niah\_multiquery} drops from 99.2 to 3.4 at 16k). This length dependence suggests that longer sequences amplify the cost of a misaligned utility ranking, since more tokens compete for limited budget slots.

\paragraph{Lazy refresh tradeoff visualization.}
Figure~\ref{fig:lazy-tradeoff} shows the wall-clock decode speedup of lazy refresh variants.

\begin{figure}[t]
\centering
\includegraphics[width=0.75\columnwidth]{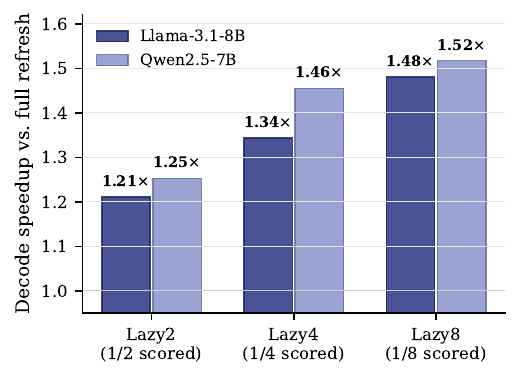}
\caption{Lazy refresh decode speedup vs.\ full-refresh InertiaKV at 90\% compression and 64k context. \lazyfour yields $1.34\times$ (Llama) and $1.46\times$ (Qwen) wall-clock speedup. For Lazy4, benchmark deltas are below 0.05 except for the small Llama RULER regression ($\Delta{=}{-}0.151$; Table~\ref{tab:lazy4-details}).}
\label{fig:lazy-tradeoff}
\end{figure}

\paragraph{Lazy4 per-benchmark quality deltas and confidence intervals.}
Table~\ref{tab:lazy4-details} reports the full per-benchmark breakdown for \lazyfour versus full refresh.

\begin{table}[ht]
\centering
\scriptsize
\setlength{\tabcolsep}{3pt}
\begin{tabular}{llrr}
\toprule
Model & Benchmark & $\Delta$ & 95\% CI \\
\midrule
Llama-3.1-8B & LongBench & +0.005 & $[-0.044, +0.053]$ \\
Llama-3.1-8B & LongBench-v2 & 0.000 & --- \\
Llama-3.1-8B & RULER & $-$0.151 & $[-0.223, -0.078]$ \\
\midrule
Qwen2.5-7B & LongBench & $-$0.026 & $[-0.111, +0.051]$ \\
Qwen2.5-7B & LongBench-v2 & $-$0.004 & --- \\
Qwen2.5-7B & RULER & +0.005 & $[-0.024, +0.033]$ \\
\bottomrule
\end{tabular}
\caption{\lazyfour quality delta versus full refresh at 90\% compression. LongBench-v2 has a single aggregate score, so no CI is computed. Both LongBench intervals and the Qwen RULER interval include zero; Llama RULER shows a small negative delta whose interval excludes zero.}
\label{tab:lazy4-details}
\end{table}

\begin{figure*}[t]
\centering
\includegraphics[width=\textwidth]{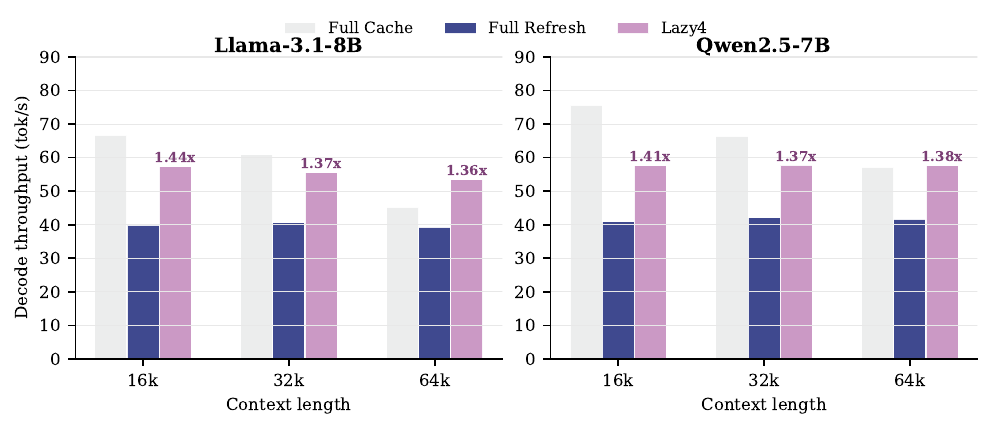}
\caption{Decode throughput scaling at 90\% compression. Each panel shows absolute throughput (tok/s) for Full Cache, Full Refresh \base, and \lazyfour at 16k, 32k, and 64k context. Annotations show \lazyfour speedup over Full Refresh, which remains stable at 1.36--1.44$\times$ across context lengths and models.}
\label{fig:efficiency-scaling}
\end{figure*}

\paragraph{Throughput scaling across context lengths.}
Figure~\ref{fig:efficiency-scaling} reports absolute decode throughput and Lazy4 speedup across 16k, 32k, and 64k contexts. Full Refresh throughput decreases with context length due to growing attention cost; Lazy4 maintains a 1.36--1.44$\times$ speedup at each tested length, indicating that the measured gain is not confined to one context length.

\paragraph{Hook-level profiling: scoring bottleneck.}
Hook-level profiling on Llama-3.1-8B at 64k context shows that the largest measured per-step cost is per-layer query-aware score reconstruction (specifically, attention fallback computation), not cache remapping or eviction logic. This motivates the lazy refresh family: it targets the largest measured cost component while leaving the retention budget and utility estimator unchanged.

\section{Reproducibility}
Reported result directories were archived and re-analyzed offline with duplicate metric identities treated as errors. For the central \lazyfour comparison, we also reran full-refresh \base and \lazyfour from fresh roots on both main backbones: eight LongBench tasks per backbone (16 model--task combinations), LongBench-v2 0-shot, and both reported RULER lengths. These clean-root reruns reproduce the task-level values in Figure~\ref{fig:lazy-tradeoff}, which we use as the canonical lazy-refresh quality result. Other experiments include architecture, scale, MoE, retrieval-grid, compression-sweep, and efficiency stress checks.

\end{document}